\documentclass{article}
\usepackage{iclr2027_conference,times}


\usepackage{amsmath,amsfonts,bm}

\def\eqref#1{equation~\ref{#1}}

\def\1{\bm{1}}

\DeclareMathAlphabet{\mathsfit}{\encodingdefault}{\sfdefault}{m}{sl}
\SetMathAlphabet{\mathsfit}{bold}{\encodingdefault}{\sfdefault}{bx}{n}

\newcommand{\E}{\mathbb{E}}

\newcommand{\R}{\mathbb{R}}

\DeclareMathOperator*{\argmax}{arg\,max}

\usepackage[utf8]{inputenc}
\usepackage[T1]{fontenc}
\usepackage{url}
\usepackage{booktabs}
\usepackage{microtype}
\usepackage{xcolor}
\usepackage{amsmath,amssymb,amsthm}
\usepackage{mathtools}
\usepackage{graphicx}
\usepackage{multirow}
\usepackage{placeins}
\usepackage{etoolbox}
\usepackage{array}
\usepackage{enumitem}
\usepackage{algorithm}
\usepackage{algpseudocode}
\usepackage{hyperref}
\graphicspath{{figures/}}

\renewcommand{\R}{\mathbb{R}}
\renewcommand{\E}{\mathbb{E}}

\newcommand{\pishape}{\pi_{\text{shaped}}}
\newcommand{\pimu}{\pi_\mu}
\newcommand{\Qpi}{Q^{\pimu}}
\newcommand{\dpimu}{d^{\pimu}}

\newcommand{\cA}{\mathcal{A}}
\newcommand{\sg}{\operatorname{sg}}

\newtheorem{theorem}{Theorem}
\newtheorem{corollary}[theorem]{Corollary}
\newtheorem{proposition}[theorem]{Proposition}
\newtheorem{lemma}[theorem]{Lemma}
\newtheorem{assumption}[theorem]{Assumption}

\title{Action Shaping:\texorpdfstring{\\}{ }Policies Absorb What They Can Express}

\author{%
  Yanjun Chen$^{1,2}$\thanks{Correspondence: \texttt{yan-jun.chen@connect.polyu.hk}}
  \And
  Jinghan Wang$^{3}$
  \And
  Xiaoyu Shen$^{1}$
  \And
  Wenjie Li$^{2}$
  \And
  Wei Zhang$^{1}$\thanks{Corresponding author: \texttt{zhw@eitech.edu.cn}}
  \AND
  \mdseries
  $^{1}$Eastern Institute of Technology \quad
  $^{2}$The Hong Kong Polytechnic University \\
  \mdseries
  $^{3}$Harbin Institute of Technology
}

\iclrfinalcopy

\begin{document}
\maketitle
\lhead{Preprint. Under review.}

\begin{abstract}
Reward shaping has a theorem: a potential-based term can be removed without changing the optimal policy.
The same practice on the action channel, an offset added in training and dropped at deployment, has no theorem.
Nothing cancels an action offset, so the correction is kept at deployment or removed without a guarantee.
We call it \emph{action shaping} and state its principle.
A trainable policy absorbs an offset its own output layer can reproduce exactly, which is what we mean by \emph{express}; what is absorbed can be removed with the return intact.
Its minimal instance is a zero-initialized linear head behind a learnable gate, added to an actor that trains through a learned action-value function, with no penalty or schedule.
The gate rises and then falls on its own, for deterministic and stochastic actors alike, and on 20 tasks removing the head costs almost nothing.
The condition is exact reproduction, not capacity: a nonlinear head with more parameters is not absorbed, and in a paired control, one linear path added to a nonlinear base head restores absorption.
Exact reproduction gives the loss a flat direction that gradient noise drifts along, and the offset's amplitude indicates, before removal, what dropping the head will cost.
Action shaping thus gains the counterpart of the shaping theorem, a condition for absorption, together with the mechanism behind it and a diagnostic that reads it.
Policies absorb what they can express, and only that.
\end{abstract}


\section{Introduction}
\label{sec:intro}

Reinforcement learning routinely trains with structure it does not deploy: shaped rewards, privileged observations, exploration bonuses, expert corrections.
For the reward channel there is a theorem that says which of these can be removed without changing the optimal policy, and it has guided reward shaping~\citep{ng1999reward}.
A privileged critic can be removed for free by construction, since the deployed policy never computes it; the action channel has nothing comparable.
An offset added to the action changes which states are visited, no telescoping sum cancels it, and a co-trained correction is kept at deployment or removed without a guarantee.
We call this practice \emph{action shaping} and ask the reward-shaping question of it: when can the offset be removed for free? We show that the answer is not an invariance but an optimization phenomenon (Figure~\ref{fig:concept}).

\begin{figure}[t]
  \centering
  \includegraphics[width=\textwidth]{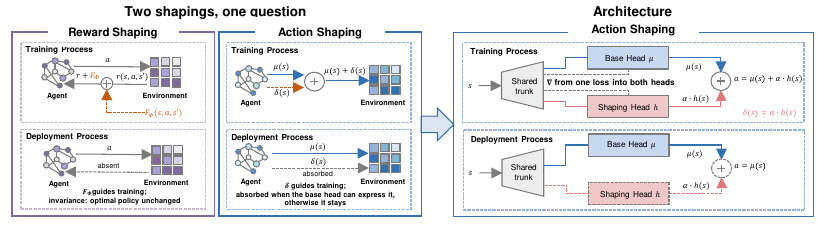}
  \caption{\textbf{A trainable policy absorbs a training-time action offset its own output layer can reproduce, and only that.} Reward shaping adds a term in training and a theorem says which terms leave the optimal policy unchanged; action shaping adds an offset $\delta = \alpha h$ to the action, and nothing cancels it. What removes it for free is absorption: when the base head $\mu$ can reproduce $\mu + \alpha h$ with its own weights, $W_\mu + \alpha W_h$, and is free to train, the gate $\alpha$ falls back toward zero and removing the head costs almost nothing. On a frozen base, or for a head that $\mu$ cannot reproduce, it is not absorbed. Right: the shaped actor in training and the base actor alone at deployment.}
  \label{fig:concept}
\end{figure}

A head added to a TD3 or SAC actor for training is \emph{absorbed}: its scalar gate, initialized near zero, rises early and then, with no schedule, no penalty, and no weight decay, falls back toward zero, and removing the head afterwards costs almost nothing.
The head is the minimal instance of action shaping, a second linear layer on the representation the actor already computes, so its absorbed state is known in closed form, an exact reference point for measuring what training does.
With a one-hidden-layer network as the head, the gate does not return to zero, and removing the head at the end costs 0.44 to 0.78 of normalized return.
With the linear head's gradient to the shared trunk cut, so that the representation is shaped by the base head alone, the gate returns to zero as before.
Whether the head is absorbed is decided not by whether it sends gradient into the trunk but by what the base can do with its own weights.

The relevant sense of ``can'' is precise and central to this paper: a trainable trunk can approximate a nonlinear offset, so capacity does not separate the two heads.
What separates them is that for the linear head there is a change of the base head's own weights, and of nothing else, that reproduces the offset exactly, and for the nonlinear head there is none.
We call an offset \emph{expressible} when such a change exists: a linear offset on the representation the base head reads is expressible, a nonlinear offset is not, and neither is a linear offset on a nonlinear base head.
Under expressibility the shaped actor and a base-only actor with merged readout weights compute the same function, so the loss is constant along a line in parameter space that transfers the offset to the base, and we prove that, at a minimizer, the point on that line where the head contributes nothing is the flattest (\S\ref{sec:theory}).
Gradient noise moves parameters along flat directions, and in the idealization of stochastic gradient descent with label noise we derive a closed-form prediction: the gate decays exponentially, and the head's weights decay too, but more slowly, by a factor of the head's size. The agents, trained with Adam and a learned critic, show that pattern.
Without expressibility there is no line; without a trainable base there is no motion along it.
Policies absorb what they can express, in this sense of the word.

The condition sorts existing practices.
A training-time offset the base can express and is free to learn is absorbed; on a frozen base it stays, as in residual policy learning; one the base cannot express stays too, or collapses under strong noise, or is removed on a schedule, as in decaying action priors and attenuated residual policies.
The offset's amplitude falls as the head is absorbed, so the training log alone tells what removal will cost.
Across 20 environments, those whose amplitude ends below a tenth of its peak lose nothing in interquartile mean when the head is removed, and the others lose 0.03 of normalized return; within one run, removal is costly around the peak and small at each end.

We make three contributions.
\textbf{Absorption requires expressibility and a trainable base.} A zero-initialized, scalar-gated linear head co-trained with a TD3 or SAC actor engages and is then absorbed; seven arms on a six-task suite show that absorption requires an expressible offset and a trainable base, does not require gradient flow from the head into the trunk, and, in a paired control, returns on a nonlinear base head as soon as one linear path makes the offset reproducible again (\S\ref{sec:mechanism}).
\textbf{Expressibility creates a flat line.} A flat-line proposition shows that an expressible offset gives the loss a flat direction that leads to a closed gate. Under label-noise stochastic gradient descent the drift along it closes the gate exponentially while the head decays more slowly, by a factor of its size, as a minimal model confirms (\S\ref{sec:theory}, Appendix~\ref{app:toy}).
\textbf{The amplitude predicts the removal cost.} Its terminal residual does so across tasks and its trajectory within a run, without a task-specific threshold (\S\ref{sec:measure}, \S\ref{sec:removal}).
A first-order improvement theorem also explains what moves the gate, which a state-value critic supplies only by sampling (\S\ref{sec:first-order}).
Anyone who drops at inference a training-time head or correction branch that acts on the output assumes that the base has absorbed it; this paper gives a condition under which that holds on the action channel, and a way to check it.


\section{Action Shaping and the amplitude diagnostic}
\label{sec:method}

\subsection{The shaped actor}
\label{sec:shaped-actor}

Let $z(s) = \phi(s) \in \R^k$ be the representation computed by the actor's trunk (two hidden layers in continuous control), and let the base head read an action from it, $\mu(s) = W_\mu z(s) + b_\mu \in \R^m$.
Our instance of action shaping adds a second head on the same representation, $h(s) = W_h z(s) + b_h$, and a single learnable scalar \emph{gate} $\alpha$, and acts with
\begin{equation}\label{eq:shaped}
  \pishape(s) \;=\; f\!\bigl(\mu(s) + \alpha\, h(s)\bigr),
  \qquad \delta(s) := \alpha\, h(s),
\end{equation}
where $f$ is the host algorithm's output map (a $\tanh$ scaled to the action bounds for TD3, the squashed Gaussian mean for SAC, a softmax over logits for discrete actions).
We call $\delta$ the \emph{offset} and $\|\delta\|$ its \emph{amplitude}.
The \emph{shaping head} is zero-initialized ($W_h = 0$, $b_h = 0$) and $\alpha \sim U(-\varepsilon, \varepsilon)$ with $\varepsilon = 0.01$, so the shaped actor coincides with the base actor at the start of training, while $\alpha \neq 0$ keeps the head's gradient nonzero;
$\varepsilon$ is the only added constant (sensitivity in Appendix~\ref{app:eps}).

Training changes nothing in the host algorithm except the action it differentiates through: the actor loss of TD3, SAC, or PPO is evaluated at the shaped action, and its gradient updates $\phi$, $W_\mu$, $W_h$, and $\alpha$ jointly with the same optimizer.
There is no penalty on $\alpha$ or $\delta$, no decay schedule, and no weight decay: the amplitude evolves under the actor loss alone.
At deployment the shaping head is dropped and the agent acts with $f(\mu(s))$ alone; the head adds one linear layer, under six thousand parameters in every continuous-control environment.

\subsection{What we measure}
\label{sec:measure}

We log two quantities throughout training: the amplitude $\|\delta_t\|$, the mean of $\|\delta(s)\|$ over a fixed batch of evaluation states at step $t$, and the \emph{removal cost}, the return of the shaped actor minus the return of the base actor alone, both returns evaluated every 5K steps on the same environment seeds and normalized per environment (Appendix~\ref{app:eval}).

The \emph{terminal residual} summarizes an amplitude trajectory in one number,
\begin{equation}\label{eq:residual}
  r \;=\; \frac{\operatorname{mean}_{t \in \text{last } 20 \text{ evaluations}} \|\delta_t\|}{\max_t \|\delta_t\|},
\end{equation}
the fraction of its own peak that the offset retains at the end of training; $r$ is a ratio of pre-activation amplitudes in continuous control; Appendix~\ref{app:p4} gives the action-space version, the reading to use where the output map saturates.
The residual $r$ is computed per run, and an environment's residual is the mean of $r$ over its seeds.
A run is \emph{non-engaged} when the peak amplitude never exceeds $10^{-3}$, so that the offset never takes part in training;
it is \emph{completed} when $r \le 0.10$, and \emph{incomplete} otherwise.
The threshold 0.10 is a fixed constant; the rank correlations of \S\ref{sec:removal} do not use it and stay between 0.565 and 0.671 ($p < 0.01$) under all four residual conventions of Appendix~\ref{app:eval}.

\paragraph{Reading the gate.}
The three states give a protocol for any training-time branch behind a learnable scalar: if the scalar returned to zero, the base alone computes what the shaped actor computed; if it rose and stayed, the branch still contributes and dropping it can cost return; if it never rose, the branch never took part.
The gate and the offset it scales, not the norm of the head, whose weights decay far more slowly (\S\ref{sec:flat-line}), are the quantities to read.

\subsection{Seven arms}
\label{sec:arms}

We train the same actor in seven experimental arms that differ only in what the shaping head reads, what the base is allowed to learn, and what either head can compute.
\begin{itemize}
  \item \textbf{Shared} (default): $h$ reads $z(s)$; every parameter trains.
  \item \textbf{Detach}: $h$ reads $\sg(z(s))$; the forward pass is identical to Shared, but no gradient flows from the shaping head into the trunk.
  \item \textbf{Independent}: $h$ reads its own trainable trunk $\phi'(s)$ of the same size; the two heads share no representation.
  \item \textbf{Frozen}: $\phi$ and $W_\mu$ are frozen at initialization; only $W_h$, $b_h$, and $\alpha$ train.
  \item \textbf{MLP head}: $h$ is a one-hidden-layer network on $z(s)$ with 16, 64, or 256 hidden units and a zero-initialized output layer; everything else is as in Shared.
  \item \textbf{MLP base head}: $\mu$ is a one-hidden-layer network on $z(s)$ with 64 hidden units and $h$ stays linear, so the offset is linear and the base head is not; a second version adds a zero-initialized linear map of $z(s)$ to the base head's output, which puts $\mu + \alpha h$ back inside the base head's own function class.
\end{itemize}
The first four arms vary how the base can change and whether gradients cross between the heads; the fifth varies what the offset can be and the last two what the base head can be.
We say the offset is \emph{expressible} when it lies in the function class of the base head on the shared representation, that is, when there exist base-head parameters that reproduce $\mu + \alpha h$ exactly for the current trunk.
A linear head on a linear base head is expressible by construction, $\mu(s) + \alpha h(s) = (W_\mu + \alpha W_h)\, z(s) + (b_\mu + \alpha b_h)$; an MLP head is not, since no base-head parameters reproduce a nonlinear function of $z$, and neither is a linear head on an MLP base head, whose function class is not closed under adding an affine map, unless that base head carries a linear path of its own.
Expressibility is relative to the base head: the trunk could in principle be retrained to make any offset linear in a new representation, and \S\ref{sec:mechanism} reports whether training does so.
Expressibility does not require the offset to be additive.

For the linear head the absorbed state is known in closed form, the base-only actor with weights $W_\mu + \alpha W_h$, and the offset $\delta$ is exactly that state's base head minus the current one; this reference point is what makes the measurement exact.
The question is whether training reaches it by itself, so that the head can be dropped.
The linear head is a probe of whether training absorbs a branch, and the arms of \S\ref{sec:mechanism} show that training absorbs exactly the expressible ones on a trainable base.


\section{Why the offset helps and why it should decay}
\label{sec:theory}

\subsection{First-order improvement and the critic}
\label{sec:first-order}

Let $G(\alpha)$ be the return of the shaped policy $\pi_\alpha$ as a function of $\alpha$, with $\pi_0 = \pimu$, and define the per-state functional
\begin{equation}\label{eq:functional}
  \Phi_h(s) \;:=\; \frac{d}{d\alpha}\, \E_{a \sim \pi_\alpha(\cdot \mid s)}\!\bigl[\Qpi(s,a)\bigr]\Big|_{\alpha = 0},
\end{equation}
which equals $\nabla_a \Qpi(s, f(\mu))^{\top} J_f(\mu)\, h(s)$ for a deterministic continuous policy, $J_f$ the Jacobian of the output map, and $\operatorname{Cov}_{\pimu(\cdot \mid s)}\bigl(h(s), \Qpi(s, \cdot)\bigr)$ for a softmax policy over discrete actions (Appendix~\ref{app:theorem1}).

\begin{theorem}[First-order improvement]\label{thm:first-order}
Under the regularity conditions of Appendix~\ref{app:theorem1},
$G(\alpha) = G(0) + \frac{\alpha}{1-\gamma}\, \E_{s \sim \dpimu}[\Phi_h(s)] + O(\alpha^2)$.
\end{theorem}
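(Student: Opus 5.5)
The plan is to use the performance difference lemma and then linearize in $\alpha$. With the paper's normalization of the discounted state distribution $\dpimu$ (a probability measure, so that $\frac{1}{1-\gamma}$ appears explicitly), the lemma gives the exact identity
\begin{equation*}
G(\alpha) - G(0) \;=\; \frac{1}{1-\gamma}\, \E_{s \sim d^{\pi_\alpha}}\Bigl[\E_{a \sim \pi_\alpha(\cdot\mid s)}\bigl[A^{\pimu}(s,a)\bigr]\Bigr],
\end{equation*}
where $A^{\pimu} = \Qpi - V^{\pimu}$. This identity holds for every $\alpha$. The rest of the argument is a first-order expansion of its right-hand side around $\alpha = 0$.

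Write $g(s,\alpha) := \E_{a\sim\pi_\alpha(\cdot\mid s)}[A^{\pimu}(s,a)]$.
\begin{itemize}
\item At $\alpha=0$ we have $g(s,0) = 0$ for every $s$, because the advantage of a policy averages to zero under that same policy.
\item Since $V^{\pimu}(s)$ does not depend on $\alpha$, $\partial_\alpha g(s,0)$ is exactly the functional $\Phi_h(s)$ of \eqref{eq:functional}.
\item A Taylor expansion in $\alpha$ with a uniform bound on the second derivative (a regularity condition) gives $g(s,\alpha) = \alpha\,\Phi_h(s) + O(\alpha^2)$ uniformly in $s$.
\end{itemize}

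Next I split the state-distribution factor as $d^{\pi_\alpha} = \dpimu + (d^{\pi_\alpha} - \dpimu)$.

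The first piece gives $\frac{\alpha}{1-\gamma}\E_{s\sim\dpimu}[\Phi_h(s)] + O(\alpha^2)$, which is the claimed term. The second piece is a product of two small factors:
\begin{itemize}
\item $\sup_s |g(s,\alpha)| = O(\alpha)$ by the previous step;
\item the total variation $\|d^{\pi_\alpha} - \dpimu\|_1 = O(\alpha)$ as well.
\end{itemize}
Their product is $O(\alpha^2)$.

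For the total-variation bound I use the standard estimate
\begin{equation*}
\|d^{\pi'} - d^{\pi}\|_1 \le \frac{\gamma}{1-\gamma}\,\sup_s \|\pi'(\cdot\mid s) - \pi(\cdot\mid s)\|_1 ,
\end{equation*}
together with the requirement that $\alpha \mapsto \pi_\alpha(\cdot\mid s)$ is Lipschitz in total variation, uniformly in $s$. This requirement holds for the squashed Gaussian and for the softmax when $h$ is bounded. The deterministic case needs separate treatment, because there total variation between Dirac measures is not small.

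For a deterministic policy I would avoid total variation entirely. I bound $g$ pointwise by Lipschitz continuity of $\Qpi(s,\cdot)$ in the action: $|g(s,\alpha)| \le L_Q \|f(\mu + \alpha h) - f(\mu)\| = O(\alpha)$. The difference of state distributions then has to be controlled through a weaker metric, for example by assuming that $\alpha \mapsto \E_{d^{\pi_\alpha}}[\psi]$ is Lipschitz for bounded Lipschitz test functions $\psi$ (a Wasserstein-type condition on the transition kernel). Alternatively, I would simply differentiate $G(\alpha)$ directly, using the deterministic policy gradient theorem, which yields $\frac{1}{1-\gamma}\E_{\dpimu}[\nabla_a \Qpi^\top J_f h]$ at $\alpha = 0$, and add a $C^2$ assumption on $G$.

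I expect this distribution-shift term in the deterministic case to be the main obstacle. The plan is therefore to state the regularity conditions as: bounded rewards, $\Qpi$ being $C^1$ in the action with a Lipschitz gradient, $f$ being $C^2$, bounded $h$, and $G \in C^2$ near $0$. Under these conditions the theorem follows from differentiability together with the identification of $G'(0)$.

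Finally, I check the closed forms of $\Phi_h$:
\begin{itemize}
\item For the deterministic policy, the chain rule gives $\nabla_a\Qpi(s,f(\mu))^\top J_f(\mu)\,h(s)$.
\item For the softmax, $\partial_\alpha \pi_\alpha(a\mid s)|_{0} = \pimu(a\mid s)\bigl(h_a(s) - \E_{\pimu}[h]\bigr)$. This gives $\Phi_h(s) = \operatorname{Cov}_{\pimu}(h, \Qpi)$.
\end{itemize}
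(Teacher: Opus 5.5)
\textbf{Verdict: the softmax and squashed-Gaussian cases are correct and follow the paper's argument, but the deterministic case (the TD3 actor) has a genuine gap.}

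\textbf{Where your proposal matches the paper.} For the softmax and squashed-Gaussian actors your argument is the paper's. It uses the performance difference lemma, $g(s,0)=0$, a uniform Taylor bound $\sup_s|g(s,\alpha)|=O(\alpha)$, and a cross term of order $O(\alpha)\cdot O(\alpha)$. That last bound comes from the TV-Lipschitz dependence of $\pi_\alpha(\cdot\mid s)$ on $\alpha$; the paper uses the softmax Lipschitz bound and $\alpha B_h/(2\sigma_{\min})$ for the Gaussian.

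\textbf{The gap in the deterministic case.} Your final plan there has three problems.
\begin{enumerate}
\item It assumes $G\in C^2$ near $0$, which turns the $O(\alpha^2)$ remainder into a hypothesis rather than a conclusion.
\item It identifies $G'(0)$ through the deterministic policy gradient theorem. That theorem needs its own hypotheses (a transition density differentiable in $a$ with bounded gradient). These are not on your list and are not among the conditions the statement refers to. Without some continuity of $\alpha\mapsto d^{\pi_\alpha}$, even $G\in C^2$ only yields $G'(0)=\frac{1}{1-\gamma}\lim_{\alpha\to 0}\E_{d^{\pi_\alpha}}[\Phi_h]$. That limit need not equal $\frac{1}{1-\gamma}\E_{\dpimu}[\Phi_h]$.
\item The Wasserstein alternative is also incomplete. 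Testing a weak-metric bound against $g(\cdot,\alpha)$ needs a Lipschitz constant in $s$ of order $\alpha$, which requires smoothness of $\Qpi$, $\mu$ and $h$ in the state.
\end{enumerate}

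\textbf{The missing idea.} The visitation bound you quote is really a bound on the induced state kernels:
$\|d^{\pi'}-d^{\pi}\|_1\le\frac{\gamma}{1-\gamma}\sup_s\|P_{\pi'}(\cdot\mid s)-P_{\pi}(\cdot\mid s)\|_1$, with $P_\pi(\cdot\mid s)=\int P(\cdot\mid s,a)\,\pi(da\mid s)$.
Your policy-level form is the special case $\|P_{\pi'}(\cdot\mid s)-P_\pi(\cdot\mid s)\|_1\le\|\pi'(\cdot\mid s)-\pi(\cdot\mid s)\|_1$.

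For a deterministic actor, $P_{\pi_\alpha}(\cdot\mid s)=P(\cdot\mid s,f(\mu(s)+\alpha h(s)))$, so the smallness can come from the environment instead of the policy. The paper's (A2) is $\mathrm{TV}(P(\cdot\mid s,a_1),P(\cdot\mid s,a_2))\le L_P\|a_1-a_2\|$. Combined with $\|f(\mu+\alpha h)-f(\mu)\|\le L_f B_h|\alpha|$, it gives $\|d^{\pi_\alpha}-\dpimu\|_1\le 2\gamma L_P L_f B_h|\alpha|/(1-\gamma)$. Your cross-term argument then runs verbatim, with $\sup_s|g(s,\alpha)|\le L_f B_h G_Q|\alpha|+O(\alpha^2)$ from (A1) taken at $a=f(\mu(s))$.

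That is exactly the paper's Step 2. It is one argument for all three actor types, with no differentiability of $G$ and no policy gradient theorem.

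Your sense that Dirac policies are the delicate case is sound, since (A2) excludes deterministic dynamics. The DPG theorem's density hypotheses exclude them too, however, so your detour gains no generality over the direct route.
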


The offset helps to first order exactly when it is aligned, on average, with the critic's action gradient; for a deterministic policy the first-order term is the deterministic policy gradient~\citep{silver2014dpg} taken with respect to $\alpha$ at $\alpha = 0$. Two consequences follow.
First, the gradient of the actor loss with respect to $\alpha$ is the same inner product at the shaped action: a $Q$-critic supplies it as a pathwise gradient, whereas under PPO the actor loss carries it only through a score-function estimate over sampled actions (\S\ref{sec:benefit}).
Second, the first-order term vanishes as the base policy approaches optimality (Appendix~\ref{app:corollary}), so the gradient that raised $\alpha$ fades as the base improves; this explains why the amplitude stops rising, and \S\ref{sec:flat-line} why it decreases.

\subsection{Expressibility creates a flat line}
\label{sec:flat-line}

Write the actor's parameters as $\theta = (\phi, W_\mu, b_\mu, W_h, b_h, \alpha)$ and let $\mathcal{L}(\theta)$ be the actor loss of the host algorithm evaluated at the shaped action.
For a linear head, the shaped pre-activation is $(W_\mu + \alpha W_h) z + (b_\mu + \alpha b_h)$, so the loss depends on the head parameters only through the merged readout $\bar W = W_\mu + \alpha W_h$ and $\bar b = b_\mu + \alpha b_h$.

\begin{proposition}[Flat line; proof in Appendix~\ref{app:flat}]\label{prop:flat}
Let the head be linear.
For every $\theta$ and every $\lambda \in \R$, the parameter $\theta_\lambda = (\phi,\, W_\mu - \lambda W_h,\, b_\mu - \lambda b_h,\, W_h,\, b_h,\, \alpha + \lambda)$ satisfies $\mathcal{L}(\theta_\lambda) = \mathcal{L}(\theta)$.
Along this line the trace of the Hessian of $\mathcal{L}$ with respect to $(W_h, b_h)$ equals $(\alpha + \lambda)^2$ times a quantity independent of $\lambda$, and at a minimizer of $\mathcal{L}$ it is minimized at $\alpha + \lambda = 0$.
\end{proposition}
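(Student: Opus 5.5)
The plan is to reduce everything to the merged readout. Write the loss as $\mathcal{L}(\theta) = \tilde{\mathcal{L}}(\phi, \bar W, \bar b)$ with $\bar W = W_\mu + \alpha W_h$ and $\bar b = b_\mu + \alpha b_h$; this is legitimate because the shaped pre-activation, and hence the shaped action $f(\bar W z(s) + \bar b)$ that the actor loss differentiates through, depends on $(W_\mu, b_\mu, W_h, b_h, \alpha)$ only through $(\bar W, \bar b)$. \textbf{Invariance} is then a one-line check. Along $\theta_\lambda$ the merged weight is $(W_\mu - \lambda W_h) + (\alpha+\lambda) W_h = W_\mu + \alpha W_h = \bar W$, and likewise for the bias, while $\phi$ is untouched. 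So $\mathcal{L}(\theta_\lambda) = \tilde{\mathcal{L}}(\phi, \bar W, \bar b) = \mathcal{L}(\theta)$ for every $\lambda$. I will note that the critic is held fixed inside the actor loss, so there is no other dependence on $\theta$.

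\textbf{Hessian with respect to the head.} Fix $(\phi, W_\mu, b_\mu, \alpha)$ and view $\mathcal{L}$ as a function of $(W_h, b_h)$ alone. The map $(W_h, b_h) \mapsto (\bar W, \bar b)$ is affine with linear part $\alpha \cdot \mathrm{Id}$. By the chain rule, $\nabla^2_{(W_h,b_h)} \mathcal{L} = \alpha^2\, \nabla^2_{(\bar W,\bar b)} \tilde{\mathcal{L}}$, and there is no second-derivative term from the inner map because it is affine. Taking traces gives $\operatorname{tr} \nabla^2_{(W_h,b_h)} \mathcal{L} = \alpha^2 \operatorname{tr} \nabla^2_{(\bar W,\bar b)} \tilde{\mathcal{L}}(\phi,\bar W,\bar b)$. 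At $\theta_\lambda$ the gate is $\alpha+\lambda$, and by the invariance step the merged readout $(\bar W,\bar b)$ and $\phi$ are unchanged. The factor $C := \operatorname{tr} \nabla^2_{(\bar W,\bar b)} \tilde{\mathcal{L}}$ is therefore the same at every point of the line, which yields the claimed form $(\alpha+\lambda)^2 C$. I will need $\tilde{\mathcal{L}}$ to be twice differentiable in the readout, which holds for smooth $f$ and a differentiable critic; I will state this as the standing regularity assumption.

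\textbf{Minimization at $\alpha+\lambda=0$.} This is the only step with content, and the point to be careful about is the sign of $C$. At a minimizer $\theta^\ast$ of $\mathcal{L}$, the point $(\phi^\ast, \bar W^\ast, \bar b^\ast)$ minimizes $\tilde{\mathcal{L}}$ over the readout. To see this, suppose some readout $(\bar W', \bar b')$ gave a lower value. Then $\theta$ with $W_\mu = \bar W'$, $b_\mu = \bar b'$ and $\alpha = 0$ would attain it, because the reparametrization is surjective onto readouts. That contradicts the minimality of $\theta^\ast$. Hence, by the second-order necessary condition, $\nabla^2_{(\bar W,\bar b)} \tilde{\mathcal{L}} \succeq 0$ and $C \ge 0$, so $(\alpha+\lambda)^2 C$ is minimized at $\alpha + \lambda = 0$. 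If $C > 0$, that is the unique minimizer along the line; if $C = 0$, the trace is identically zero and the statement holds trivially. I also note that every point of the line is itself a minimizer, since the loss is constant along it, so the comparison is among equal-loss minimizers, as the statement intends. If only a local minimizer is assumed, the same argument goes through locally, because the reparametrization is an open map near any point with $\alpha$ finite: varying $W_\mu$ alone already spans all readouts.
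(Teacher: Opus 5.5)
Your proposal is correct and follows the paper's proof step for step. You use invariance of the merged readout $(\bar W, \bar b)$ and the trunk along the line, and the chain rule through the affine map $W_h \mapsto W_\mu + \alpha W_h$, which gives $\alpha^2$ times a readout Hessian that is constant along the line; that readout Hessian is positive semidefinite at a minimizer because any change of readout can be made through $W_\mu$ alone, which is exactly the paper's argument.
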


The line of Proposition~\ref{prop:flat} is the set of ways to split one and the same shaped actor between its base head and its shaping head.
At $\alpha = 0$ the base head alone computes the shaped actor's output and the shaping head contributes nothing; at a minimizer, this split is the flattest.
The loss is constant along the line, so a noiseless gradient flow does not move along it; what moves a parameter along a flat direction is gradient noise.
Near a manifold of minimizers, stochastic gradient descent with label noise drifts along the manifold as a gradient flow of the trace of the Hessian~\citep{blanc2020implicit,damian2021labelnoise,li2022zeroloss,ziyin2024symmetry}, the drift that in other settings carries it to sparse solutions~\citep{pesme2021diagonal} and simpler invariant subnetworks~\citep{kunin2021neuralmechanics,chen2023stochasticcollapse}; on the line of Proposition~\ref{prop:flat} we obtain that trace in closed form.

\begin{proposition}[Drift along the line]\label{prop:drift}
Let the trunk be fixed, absorb the biases into $z$ as a constant feature, let $\Sigma = \E[z z^\top]$, nonsingular, and $S = \operatorname{tr}\Sigma$, and let the loss be a squared error in an $m$-dimensional action.
On the zero-loss manifold the trace of the Gauss--Newton Hessian depends on the head parameters only through $m S (1 + \alpha^2) + \operatorname{tr}(W_h \Sigma W_h^\top)$.
In the small-step limit of stochastic gradient descent with label noise, whose dynamics on the manifold is the projected gradient flow of that trace (\citealp{li2022zeroloss}, Corollary 5.2), the gate and the head evolve, to leading order in $\alpha$, as
\begin{equation}\label{eq:rates}
  \frac{d\alpha}{dt} = -\,c\,\frac{m S - \operatorname{tr}(W_h \Sigma W_h^\top)}{1 + \|W_h\|_F^2}\,\alpha,
  \qquad
  \frac{d W_h}{dt} = -\,c\, W_h \Sigma ,
\end{equation}
with one positive constant $c$ set by the step size and the noise level; the gate decays exponentially whenever $\operatorname{tr}(W_h \Sigma W_h^\top) < m S$, which holds for every head with $\|W_h\|_2^2 < m$, and in every case the gate and the head both tend to zero.
\end{proposition}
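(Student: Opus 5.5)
The plan is to make the Li et al. drift explicit by computing the Gauss--Newton trace in closed form, writing down the zero-loss manifold, and projecting the gradient of the trace onto its tangent space. With the trunk fixed and biases absorbed, the prediction on input $z$ is $(W_\mu+\alpha W_h)z$. The per-sample Jacobian blocks with respect to $(W_\mu, W_h, \alpha)$ are:
\begin{itemize}
\item $I_m\otimes z^\top$ for $W_\mu$;
\item $\alpha\,(I_m\otimes z^\top)$ for $W_h$;
\item $W_h z$ for $\alpha$.
\end{itemize}
The Gauss--Newton trace $\E\|J\|_F^2$ is therefore
\begin{equation*}
T(\theta)=m S+\alpha^2 m S+\operatorname{tr}(W_h\Sigma W_h^\top),
\end{equation*}
which gives the first claim. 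Because $\Sigma$ is nonsingular, zero loss forces the merged readout to equal the target, $W_\mu+\alpha W_h=W^\star$. The manifold is thus $\mathcal M=\{(W^\star-\alpha W_h,\,W_h,\,\alpha)\}$, a smooth graph over $(W_h,\alpha)$. It contains the line of Proposition~\ref{prop:flat}, and on it $T$ does not depend on $W_\mu$.

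Next I invoke \citealp{li2022zeroloss}, Corollary 5.2: the limiting dynamics are $\dot\theta=-c'\,P_{\mathcal M}\nabla T$, where $P_{\mathcal M}$ is the Euclidean projection onto the tangent space of $\mathcal M$. Here $\nabla T=(0,\;2W_h\Sigma,\;2\alpha mS)$. The normal space of $\mathcal M$ is spanned, for each $M\in\R^{m\times k}$, by the gradients of the constraint $\langle M, W_\mu+\alpha W_h\rangle$, namely $(M,\;\alpha M,\;\langle M,W_h\rangle)$. I write the projection as $\nabla T-n(M)$ and determine $M$ from the tangency condition $d W_\mu+\alpha\,dW_h+d\alpha\,W_h=0$, which reads
\begin{equation*}
(1+\alpha^2)M+\langle M,W_h\rangle W_h=2\alpha\,W_h\Sigma+2\alpha mS\,W_h .
\end{equation*}
Pairing this with $W_h$ gives $\langle M,W_h\rangle$ in closed form:
\begin{equation*}
\langle M,W_h\rangle\,(1+\alpha^2+\|W_h\|_F^2)=2\alpha\bigl(\operatorname{tr}(W_h\Sigma W_h^\top)+mS\|W_h\|_F^2\bigr).
\end{equation*}
The $\alpha$-component of the flow is $-c'\bigl(2\alpha mS-\langle M,W_h\rangle\bigr)$. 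Dropping the $\alpha^2$ in the denominator, this simplifies to $-2c'\alpha\,\bigl(mS-\operatorname{tr}(W_h\Sigma W_h^\top)\bigr)/(1+\|W_h\|_F^2)$. The $W_h$-component is $-c'(2W_h\Sigma-\alpha M)$, and since $M=O(\alpha)$ this equals $-2c'W_h\Sigma+O(\alpha^2)$. Setting $c=2c'$ yields \eqref{eq:rates}.

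For the qualitative conclusions I use $\operatorname{tr}(W_h\Sigma W_h^\top)=\operatorname{tr}(\Sigma W_h^\top W_h)\le\|W_h\|_2^2\operatorname{tr}\Sigma$. Hence $\|W_h\|_2^2<m$ implies a strictly positive gate rate. The head equation is linear with $\Sigma\succ0$, so $W_h(t)=W_h(0)e^{-c\Sigma t}\to0$ exponentially. After a finite time $\|W_h\|_2^2<m$ holds regardless of the initial condition. From then on the $\alpha$-rate is bounded below by a positive constant, so $\alpha\to0$; before then $\alpha$ can grow by at most a bounded factor.

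I expect the main obstacle to be making the leading-order statement honest rather than the algebra. Three points need care:
\begin{itemize}
\item the hypotheses of the Li et al. corollary must hold, namely that $\mathcal M$ is a smooth manifold of minimizers on which the Hessian has constant rank $mk$ in the normal directions;
\item the $O(\alpha^2)$ corrections, together with the term $\alpha M$ in the $W_h$ equation, must not spoil the conclusion when $\alpha$ is transiently large;
\item the Gauss--Newton trace must be the right object, which holds since on the zero-loss set it coincides with the full Hessian trace for squared error.
\end{itemize}
For the second point, my first attempt is a Grönwall-type bound on the neglected terms in the regime where $|\alpha|$ is small, which is the regime the statement addresses.
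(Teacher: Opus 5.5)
Your computation is the paper's own route: the same Gauss--Newton trace and the same normal-space equation $(1+\alpha^2)M+\langle M,W_h\rangle W_h=2\alpha(W_h\Sigma+mS\,W_h)$, which the paper uses to cross-check its orthogonalization of the line direction against the head directions. If you keep the $\alpha^2$ that you drop, you also get the paper's exact gate equation $\dot\alpha=-c\,\alpha\,\bigl(mS(1+\alpha^2)-\operatorname{tr}(W_h\Sigma W_h^\top)\bigr)/(1+\alpha^2+\|W_h\|_F^2)$. The leading-order rates and the bound $\operatorname{tr}(W_h\Sigma W_h^\top)\le\|W_h\|_2^2S$ are correct.

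The gap is the last clause: that the gate and the head tend to zero in every case. You prove it only for the truncated system, where $\dot W_h=-cW_h\Sigma$ decouples. Your proposed remedy, a Gr\"onwall bound on the neglected terms while $|\alpha|$ is small, cannot control the case this clause is mainly about.

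When $\operatorname{tr}(W_h\Sigma W_h^\top)>mS$, the exact gate equation pushes $|\alpha|$ up toward $\alpha^2=\operatorname{tr}(W_h\Sigma W_h^\top)/(mS)-1$, which need not be small. The exact head equation is $\dot W_h=-\tfrac{c}{1+\alpha^2}W_h\Sigma+\tfrac{c\,\alpha^2(mS(1+\alpha^2)-\operatorname{tr}(W_h\Sigma W_h^\top))}{(1+\alpha^2)(1+\alpha^2+\|W_h\|_F^2)}W_h$. Its second term makes the head grow whenever the gate is shrinking. So $W_h(0)e^{-c\Sigma t}$ is not the actual trajectory, and your ``bounded factor'' of growth in $\alpha$ can carry you out of the regime where the truncation is valid.

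The missing idea is a Lyapunov argument on the exact flow, which needs no control on $\alpha$:
\begin{itemize}
\item In the chart $(\alpha,W_h)$ of $\mathcal M$, the trace is $mS(1+\alpha^2)+\operatorname{tr}(W_h\Sigma W_h^\top)$. It is strictly convex and coercive, and its only critical point is $(0,0)$.
\item The projected flow is the Riemannian gradient flow of this trace, so the trace decreases at rate $c'\|P_{\mathcal M}\nabla T\|^2$ and the trajectory stays in a compact sublevel set.
\item By LaSalle, the trajectory converges to the unique zero of $P_{\mathcal M}\nabla T$, namely $\alpha=0$, $W_h=0$, $W_\mu=W^\star$.
\end{itemize}
This is how the paper closes the claim.
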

\noindent The proof (Appendix~\ref{app:drift}) evaluates the trace on the line and projects its gradient onto the tangent space; the denominator is the squared length of the tangent vector along the line, and the head's output power $\operatorname{tr}(W_h \Sigma W_h^\top)$ enters the numerator because that vector is not orthogonal to the head directions.
The gate returns to zero in every case, so an expressible offset does not merely stop rising. The head decays per direction at the rate of one eigenvalue of $\Sigma$, whereas the gate decays at the rate of $m$ times their sum minus the head's output power, divided by $1 + \|W_h\|_F^2$, so for a small head the ratio of the two rates is of the order of the number of weights in the head, $m k$.
On the time scale on which the gate closes, the head is therefore nearly constant, and it decays only slowly afterwards.
This is the two-process picture the data show: the gate reads functional absorption, and the head's remaining norm is not evidence that absorption is incomplete.
Appendix~\ref{app:toy} checks the proposition in its own idealization, stochastic gradient descent with label noise started on the line; every prediction of the proposition holds there: the gate decays exponentially, within a factor of 1.6 of the flow (the excess being the finite-step correction) and in proportion to the noise variance; the head decays only after the gate has closed and more than a hundred times slower; and a linear head on a one-hidden-layer base, which that base can approximate but not reproduce, does not decay at all.
The agents, trained with Adam and a bootstrapped critic, show its pattern: the gate closes while most of the head's output remains (\S\ref{sec:arc}).

An inexpressible offset has no such line: if the base head cannot reproduce $\mu + \alpha h$ exactly for any choice of its parameters, then every minimizer of $\mathcal{L}$ that uses the offset needs $\alpha \neq 0$, and there is no flat direction through $\alpha = 0$ for the drift to follow.
The condition is exact reproduction, not approximation, as the linear head on a one-hidden-layer base shows in the minimal model and on the agents alike (\S\ref{sec:mechanism}).
A frozen base removes the line for a different reason: $W_\mu$ cannot move, so the split cannot change.
Detaching the head's gradient from the trunk changes neither the loss nor the line, and \S\ref{sec:mechanism} finds that it does not change whether $\alpha$ returns to zero.


\section{Experiments}
\label{sec:experiments}

\paragraph{Setup.}
In continuous control we train the shaped variants AS-TD3, AS-SAC, and AS-PPO of TD3, SAC, and PPO~\citep{fujimoto2018td3,haarnoja2018sac,schulman2017ppo} for 1M steps with 5 seeds on each of 20 environments, ten from MuJoCo v5 and ten from the DeepMind Control Suite~\citep{todorov2012mujoco,tassa2018dmcontrol}.
Scores are min-max normalized per environment with fixed anchors and aggregated by the interquartile mean (IQM) with 95\% stratified bootstrap intervals~\citep{agarwal2021rliable}; terminal scores average the last 20 evaluations (Appendix~\ref{app:eval}).
Mechanism experiments use six environments, the \emph{mechanism suite} (Hopper, HalfCheetah, Humanoid, finger/spin, walker/run, humanoid/walk), under TD3 and SAC with 5 seeds, 60 runs per arm (180 for the three-width MLP-head arm).

\subsection{What the base retains}
\label{sec:benefit}

\textbf{Removing the head costs almost nothing on a trainable base and much more on a frozen one.}
Over the 20 environments, the IQM of the per-run removal cost is $+0.006$ [0.002, 0.012] under TD3 and $+0.001$ [$-0.001$, 0.003] under SAC, and the per-environment interval over TD3 and SAC contains zero in 15 of 20 environments (Figure~\ref{fig:removal}); \S\ref{sec:removal} shows that the terminal residual predicts where the cost is paid.
With the trunk and base head frozen at initialization, the shaped actor scores 0.328 (TD3) and 0.371 (SAC), the base alone 0.011 and 0.009, and the IQM of the per-run removal cost is $+0.201$ and $+0.246$ (Figure~\ref{fig:headline}b).
Under PPO, whose critic is a state-value function, $\alpha$ stays within 0.05 of its initialization in 90 of the 100 runs, and removing the head costs $-0.0001$ [$-0.0014$, 0.0008]: the head is inert (\S\ref{sec:first-order}).

\begin{figure}[t]
  \centering
  \includegraphics[width=\textwidth]{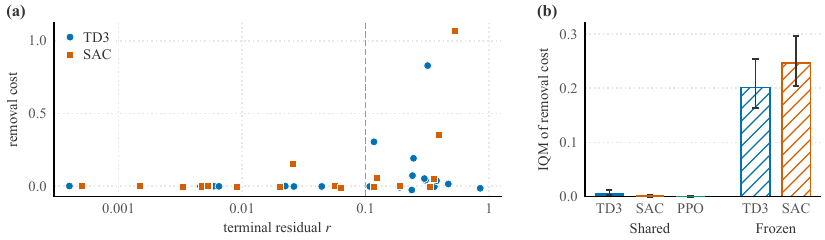}
  \caption{\textbf{The removal cost follows the residual, and is large on a frozen base.} (a) Removal cost in normalized return against the terminal residual $r$ of \S\ref{sec:measure}, one point per environment and algorithm (20 environments under TD3 and under SAC, mean over 5 seeds); the dashed line marks the threshold $r = 0.10$. (b) IQM of the per-run removal cost with its 95\% stratified bootstrap interval, for the shared head under TD3, SAC and PPO and for the frozen base under TD3 and SAC.}
  \label{fig:headline}
\end{figure}

\subsection{The arc: rise, then a decline that depends on the critic}
\label{sec:arc}

In the shared arm, in every environment of the mechanism suite, the amplitude $\|\delta_t\|$ traces an \emph{arc}: it rises and then falls (Figure~\ref{fig:arc}).
Under TD3 the median amplitude in four of the six environments peaks within 150K steps and falls to between 0.5\% and 25\% of its peak by 1M; finger/spin ends at 40\% and humanoid/walk at 35\%. Under SAC all six peak within 100K steps and end at or below 11\% of their peak.
The fall is due to $\alpha$: across the six environments the median decline of $|\alpha|$ from its peak is 98\%, while $\|h\|$ ends between 20\% and 94\% of its own peak, above half in ten of the twelve environment-algorithm cells, so the head retains most of its norm and its gate closes (Appendix~\ref{app:decomposition}).
Over all 20 environments the head engages in every one of the 200 TD3 and SAC runs, and the arc completes (\S\ref{sec:measure}) within 1M steps in 7 under TD3 and 13 under SAC; the removal cost that remains comes from the others (\S\ref{sec:removal}), and in each of six unfinished TD3 arcs run three times as long the residual falls further (Appendix~\ref{app:p4}).

\textbf{The critic can decide whether the gate closes.} On the four environments of the update-to-data experiment, the one TD3 gate that the twin critic leaves open at 1M, Hopper, closes under a ten-member ensemble critic~\citep{chen2021randomized}, its terminal-to-peak ratio of $|\alpha|$ falling from 21\% to 0.2\%, and the SAC gates close with either critic (Appendix~\ref{app:utd}).

\begin{figure}[t]
  \centering
  \includegraphics[width=\textwidth]{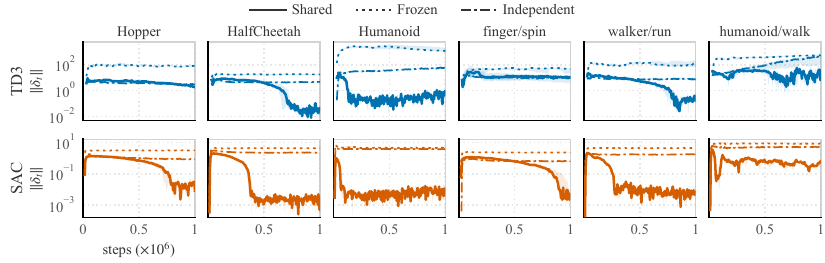}
  \caption{\textbf{The shared amplitude rises and falls, further under SAC, while the frozen and independent ones stay high.} $\|\delta_t\|$ per environment on the mechanism suite, median over 5 seeds with the interquartile range shaded, log scale shared within a row. Three of the five SAC frozen-base Humanoid runs diverge by 220K steps, so that line is the median over the rest.}
  \label{fig:arc}
\end{figure}

\subsection{Removal cost along the amplitude trajectory}
\label{sec:removal}

\textbf{The cost of removing the head follows the residual, not the task,} as it should if absorption is what makes the head removable.
We split the 20 environments by the fixed criterion $r \le 0.10$: in the environments whose arc completed, removing the head costs $+0.0007$ [$-0.0007$, 0.0019] of normalized return under TD3 (7 environments) and $-0.0003$ [$-0.0014$, 0.0008] under SAC (13), nothing in either case (the IQM of the per-run cost, as in \S\ref{sec:benefit}). In the 13 and 7 environments whose arc did not complete, it costs $+0.028$ [0.013, 0.053] and $+0.030$ [0.005, 0.075].
Without any threshold, the size of an environment's removal cost rises with its residual: the rank correlation is 0.61 under TD3 ($p = 0.004$, bootstrap interval [0.22, 0.80] over environments) and 0.64 under SAC ($p = 0.003$, [0.29, 0.85]) (Figure~\ref{fig:headline}a).

\textbf{The same dependence appears inside a single run.} For every run of the mechanism suite we remove the head at four checkpoints set by a rule on the amplitude, and the cost is small during the rise, largest at or just after the peak, and indistinguishable from zero at the end, under both algorithms (Table~\ref{tab:checkpoint}). Around its peak the offset therefore raises the shaped actor's return by up to $+0.090$ under TD3 and $+0.109$ under SAC, the improvement that \S\ref{sec:first-order} describes.

\begin{table}[t]\setlength{\belowcaptionskip}{4pt}
  \caption{\textbf{Removal cost is largest at or just after the amplitude peak.} The head is removed at four points of each run, located by a rule that reads only the amplitude in action space (Appendix~\ref{app:eval}), on the mechanism suite; mean with 95\% bootstrap CI over 30 runs per algorithm.}
  \label{tab:checkpoint}
  \centering\small
  \begin{tabular}{lr@{\hspace{0.4em}}lr@{\hspace{0.4em}}l}
    \toprule
    Checkpoint & \multicolumn{2}{c}{TD3} & \multicolumn{2}{c}{SAC} \\
    \midrule
    rising & $+$0.015 & [0.001, 0.032] & $-$0.003 & [$-$0.009, 0.002] \\
    peak & $+$0.090 & [0.051, 0.134] & $+$0.045 & [0.009, 0.094] \\
    post-peak & $+$0.056 & [0.028, 0.091] & $+$0.109 & [0.041, 0.193] \\
    final & $+$0.017 & [$-$0.005, 0.049] & $-$0.012 & [$-$0.032, 0.002] \\
    \bottomrule
  \end{tabular}
\end{table}

\subsection{What absorption requires: seven arms}
\label{sec:mechanism}

Section~\ref{sec:flat-line} predicts that the offset is absorbed when the base can express it and is free to change, and that cutting the gradient between the heads does not prevent it; we measure the terminal residual and the removal cost of every arm of \S\ref{sec:arms}, and each arm behaves as predicted (Figure~\ref{fig:arms}).

Averaged over the twelve cells the shared arm's residual is 0.10 and removal costs $+0.003$.
Detaching the head from the trunk, so that no gradient from the offset reaches the representation, leaves the residual at 0.06 and the cost at $+0.0004$: the base absorbs the offset without receiving gradient from it, because it trains on the same critic at the same shaped action.
Giving the head its own trunk raises the residual to 0.55 (TD3) and 0.68 (SAC) and the cost to $+0.22$ and $+0.55$; the offset is still linear in a representation, but not in the representation the base reads, so the base cannot express it without first re-learning that representation.
Freezing the base leaves the residual at 0.68 under TD3 and 0.77 under SAC, never below 0.57 in any environment, with the removal cost of \S\ref{sec:benefit}.

\textbf{Linearity, not width, determines the outcome.}
The MLP-head arm changes only what the head can compute.
With a one-hidden-layer head of 16, 64, or 256 units, everything else unchanged, the residual stays between 0.58 and 0.68 under TD3 and between 0.74 and 0.83 under SAC, removal costs between $+0.44$ and $+0.78$, and none of the 36 (width, algorithm, environment) cells completes its arc, against eight of twelve for the linear head trained in the same sweep.

\textbf{A head the base head can reproduce is absorbed, and one linear path determines whether it can.}
The last two arms move the nonlinearity to the base head and keep the shaping head linear.
On a one-hidden-layer base head of 64 units the head engages in all twelve (environment, algorithm) cells and its arc completes in none: the residual is 0.60 under TD3 and 0.82 under SAC, and removing the head costs $+0.38$ and $+0.35$.
Adding a zero-initialized linear path to that same base head, which restores the merge of \S\ref{sec:arms} and changes nothing else, lowers the residual in all twelve paired cells, its median from 0.79 to 0.012, and brings the cost to an IQM of $-0.0002$ [$-0.0007$, 0.0004] (Appendix~\ref{app:p1bc}).

\begin{figure}[t]
  \centering
  \includegraphics[width=\textwidth]{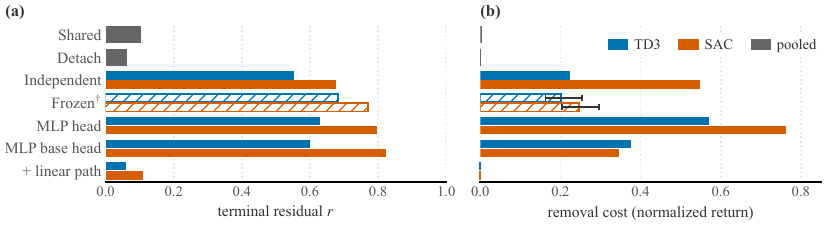}
  \caption{\textbf{Absorption requires expressibility and a trainable base.} Terminal residual $r$ (a) and removal cost (b) for each arm of \S\ref{sec:arms} on the mechanism suite, averaged over seeds and environments; the last row adds a zero-initialized linear path to the MLP base head. Shared and Detach are pooled over algorithms, and the MLP-head row is the mean over widths 16, 64, and 256, which agree within 0.11 in $r$. $\dagger$: Frozen residual over the finite runs (27 of 30 under SAC), Frozen cost the IQM over the 20 environments of \S\ref{sec:benefit}, with its 95\% stratified bootstrap interval.}
  \label{fig:arms}
\end{figure}

\subsection{Discrete actions}
\label{sec:discrete}

We ask the same question for discrete actions, a second geometry: on 26 Atari games with a discrete SAC host~\citep{zhou2022sdsac}, the offset acts on the logits and the base is read out by dropping it.
Under one protocol for all three policies, the base actor alone scores a human-normalized IQM of 0.183 [0.157, 0.216], the shaped actor 0.170, and the unshaped host 0.154; removing the head raises the IQM by 0.013 [0.003, 0.023] (Appendix~\ref{app:atari}).


\section{Related work}
\label{sec:related}

\paragraph{Reward shaping and the action channel.}
Potential-based shaping characterizes the reward modifications that leave the optimal policy unchanged and can be removed at deployment~\citep{ng1999reward}.
The action channel has no such invariance, since an offset changes the state distribution, so the question has to be answered by the training dynamics; our answer is a condition for absorption and the mechanism behind it (long form in Appendix~\ref{app:related}).

\paragraph{Folding a branch into a layer.}
Structural re-parameterization~\citep{ding2021repvgg,hu2022orepa,dlr2026} and linear over-parameterization~\citep{guo2020expandnets,arora2018implicit,du2018balanced} train a multi-branch block and fold it into one layer after training; our linear head admits the same merge (\S\ref{sec:arms}), which is our reference point for absorption. We show that training reaches it by itself, and only where an exact merge exists: a base head that can approximate the branch but not reproduce it keeps it, and one linear path that restores the merge restores absorption (\S\ref{sec:mechanism}). The condition thus predicts, for any co-trained branch, whether training will absorb it, and for a branch that admits no merge it predicts that the branch stays (Appendix~\ref{app:p9}); the gate reads either outcome while training runs.

\paragraph{Scalar-gated branches.}
In ReZero~\citep{bachlechner2021rezero} and LayerScale~\citep{touvron2021cait} a residual branch is scaled by a learnable gate that starts at or near zero, and the gate grows and stays open. Gates into a frozen backbone close through a dead-gradient regime~\citep{farazi2026gate}, attention sublayers learn to mute themselves~\citep{saikumar2025datafree}, and sparsity penalties prune explicitly~\citep{huang2018sss}. Our $\alpha$ closes with no penalty and only after it has opened, and the three states of \S\ref{sec:measure} distinguish these cases.

\paragraph{Training-only components.}
Privileged-sensor scaffolds train with a component and deploy without it~\citep{freed2024scaffolder}. Residual distillation~\citep{li2020residualdistillation}, tapered normalization~\citep{tapernorm2026}, decaying action priors~\citep{sood2024decap,sood2025apex}, and jump-start or attenuated residual policies~\citep{uchendu2023jsrl,trumpp2026arpo,bolychev2026agency} remove one on a schedule a designer sets; in every case the schedule, not the base, removes it.
Residual policy learning keeps the correction on a fixed base~\citep{silver2018residual,johannink2019residual,wang2025rpg,resfit2025,warprl2026}, the regime of our frozen arm; under the condition of \S\ref{sec:theory} no schedule is needed: an expressible branch on a trainable base drifts toward absorption without one.

\paragraph{Internalization and removability.}
\citet{tsilivis2026internalization} define internalization as absorbing an explicit procedure into weights of the same hypothesis class; their scaffold is removed by a curriculum, ours is absorbed, and we add the trainable-base condition, the flat-line mechanism, and an amplitude diagnostic.
Our checkpoint interventions find within a run that a parametric branch's removability depends on the training trajectory, as concurrent work finds for an annealable prior~\citep{yang2026removability}.


\section{Discussion}
\label{sec:discussion}

\paragraph{What removal means.}
Removal deploys the base actor as trained, with the head deleted; for the linear head the fold $W_\mu + \alpha W_h$ of \S\ref{sec:arms} is the closed-form reference point of the absorbed state: the offset $\delta$ is the distance of the trained base head from it, and absorption is the finding that training closes the distance on its own, with no merge performed.
Removal can also be free without absorption, for a head that never engaged (\S\ref{sec:measure}) or a redundant branch that collapses under strong noise (Appendix~\ref{app:toy}); the reading of \S\ref{sec:measure} covers those cases as well, and the condition speaks to absorption.
Where no fold exists, for a nonlinear or multiplicative head (Appendices~\ref{app:p2} and~\ref{app:p9}) or a base that is frozen or belongs to another module, that reading is the only one.

\paragraph{Inexpressible, not slow.}
At three million steps, three times the budget, the width-64 MLP head stays unabsorbed, median residual 0.73 and removal cost $+0.91$ (Appendix~\ref{app:p2}); for a linear offset the paired control of \S\ref{sec:mechanism} separates exact reproduction from budget.

\paragraph{Batch size in the agents.}
In a batch-size sweep on the agents, which train with Adam and a bootstrapped critic, a batch of 64 completes the arc before one of 1024 in five of six pairs, the direction Proposition~\ref{prop:drift} predicts (Appendix~\ref{app:p8}).

\paragraph{What the condition sorts.}
The condition reduces the question of when a co-trained branch can be removed for free to two questions about the agent: can the base head reproduce the branch exactly, and is the base trainable?
An expressible offset on a trainable base decays under gradient noise and needs no schedule (\S\ref{sec:mechanism}).
An offset on a fixed base stays, as in residual policy learning and our frozen arm.
An offset the base head cannot reproduce has no flat line to a closed gate and stays as well, as in every inexpressible arm of \S\ref{sec:mechanism} and in the same linear head composed multiplicatively, whose gate stays open in eleven of twelve suite cells (Appendix~\ref{app:p9}).
The recipe is short: add the head, watch its amplitude (in action space where the output map saturates, Appendix~\ref{app:p4}), and drop the head once the amplitude has fallen below a tenth of its peak.

\section{Conclusion}
\label{sec:conclusion}
Anyone who adds a branch to the action channel during training and removes it at deployment now has a condition that predicts whether training absorbs it.
We show that the condition is exact reproduction by a trainable base head, not capacity: a nonlinear head with more parameters is not absorbed, and in a paired control, one linear path added to a nonlinear base head restores absorption.
We identify the mechanism that explains why no schedule is needed: expressibility gives the loss a flat direction that leads to a closed gate, and gradient noise moves the parameters along it.
The gate reads either outcome while training runs: where the offset is absorbed it rises and then falls on its own, for deterministic and stochastic actors alike, and on 20 tasks removing the head costs almost nothing.
The condition can be asked of any co-trained branch on the action channel, whether it comes from a controller, a prior, or a residual policy, and, through the softmax case of Theorem~\ref{thm:first-order}, of discrete actions as well as continuous ones: Propositions~\ref{prop:flat} and~\ref{prop:drift} give the mechanism for a linear head on a linear base head, and the arms of \S\ref{sec:mechanism} and Appendix~\ref{app:p9} show that the branches absorbed are exactly those the base head can reproduce.
Policies absorb what they can express, and only that.


\bibliography{refs}
\bibliographystyle{iclr2027_conference}

\appendix

\AtBeginEnvironment{table}{\setlength{\belowcaptionskip}{4pt}}
\section{Proofs}
\label{app:theorem1}

This appendix proves Theorem~\ref{thm:first-order} for a deterministic continuous actor and for the softmax over discrete actions, the continuous case written for the identity output map, with the Jacobian factor of a general smooth map such as the scaled $\tanh$ of TD3 supplied at its end, and then proves the corollary used in \S\ref{sec:first-order}. For a reparameterized Gaussian actor such as SAC's, $a = f(\mu(s) + \alpha h(s) + \sigma(s)\,\xi)$ with $\xi \sim \mathcal{N}(0, I)$ and $\sigma(s)$ bounded below by $\sigma_{\min} > 0$, the same four steps hold inside the expectation over $\xi$, with (A1) taken on the whole action set; the distribution shift is then bounded through the total-variation distance between the two Gaussians, at most $\alpha B_h / (2 \sigma_{\min})$, without (A2), and the first-order term is $\E_\xi\bigl[\nabla_a \Qpi(s, a)^\top J_f(\mu(s) + \sigma(s)\,\xi)\bigr]\, h(s)$, the derivative in~\eqref{eq:functional}.

\paragraph{Performance difference lemma.}
The derivation rests on the following lemma~\citep{kakade2002approximately}.

\begin{lemma}[Performance difference, \citealp{kakade2002approximately}]
\label{lem:pdl}
For any two policies $\pi$ and $\pi'$ acting in the same MDP,
\begin{equation}\label{eq:pdl-full}
  J(\pi') - J(\pi)
  \;=\; \frac{1}{1-\gamma}\;
  \E_{s \sim d^{\pi'}}\!\left[
    \E_{a \sim \pi'(\cdot|s)}\!\bigl[A^\pi(s,a)\bigr]
  \right],
\end{equation}
where $d^{\pi'}$ is the discounted state visitation distribution under $\pi'$ and $A^\pi(s,a) = Q^\pi(s,a) - V^\pi(s)$ is the advantage under $\pi$.
\end{lemma}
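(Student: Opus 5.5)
The plan is the standard telescoping argument of \citet{kakade2002approximately}, carried out along trajectories of $\pi'$. Write $J(\pi) = \E_{s_0 \sim \rho}[V^\pi(s_0)]$ for the start distribution $\rho$ shared by both policies. Write $d^{\pi'}(s) = (1-\gamma)\sum_{t \ge 0} \gamma^t \Pr(s_t = s \mid \pi', \rho)$ for the normalized discounted visitation distribution; the factor $\frac{1}{1-\gamma}$ in~\eqref{eq:pdl-full} is exactly the inverse of this normalization. Throughout I assume rewards are bounded, $|r| \le R_{\max}$, so that every value function is bounded by $R_{\max}/(1-\gamma)$. This is presumably already among the regularity conditions of Appendix~\ref{app:theorem1}.

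\textbf{Step 1: expand the return of $\pi'$.} Fix a trajectory $(s_0, a_0, r_0, s_1, \dots)$ generated by $\pi'$ from $s_0 \sim \rho$. Write $J(\pi') - J(\pi) = \E_{\pi'}\bigl[\sum_{t \ge 0} \gamma^t r_t\bigr] - \E_{s_0}[V^\pi(s_0)]$. For a finite horizon $T$, add and subtract the telescoping sum $\sum_{t=0}^{T-1} \bigl(\gamma^{t+1} V^\pi(s_{t+1}) - \gamma^t V^\pi(s_t)\bigr) = \gamma^T V^\pi(s_T) - V^\pi(s_0)$. This gives
\[
\sum_{t=0}^{T-1}\gamma^t r_t - V^\pi(s_0) = \sum_{t=0}^{T-1}\gamma^t\bigl(r_t + \gamma V^\pi(s_{t+1}) - V^\pi(s_t)\bigr) - \gamma^T V^\pi(s_T).
\]

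\textbf{Step 2: identify the advantage.} Take the conditional expectation of each summand given $(s_t, a_t)$. By the Markov property and the definition $Q^\pi(s,a) = \E[r + \gamma V^\pi(s') \mid s, a]$, the summand becomes $\gamma^t\bigl(Q^\pi(s_t,a_t) - V^\pi(s_t)\bigr) = \gamma^t A^\pi(s_t, a_t)$, where $a_t \sim \pi'(\cdot \mid s_t)$. The key point is that the transition kernel does not depend on which policy chose $a_t$. So the one-step backup under $\pi$ can be evaluated at actions drawn from $\pi'$.

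\textbf{Step 3: pass to the limit and rewrite.} Let $T \to \infty$. The boundary term is bounded by $\gamma^T R_{\max}/(1-\gamma) \to 0$. The partial sums are dominated by $\sum_t \gamma^t \cdot 2R_{\max}/(1-\gamma)$, so dominated convergence (or Fubini, since the sum converges absolutely) justifies exchanging the sum and the expectation. We obtain $J(\pi') - J(\pi) = \sum_t \gamma^t \E_{s_t}\bigl[\E_{a \sim \pi'(\cdot\mid s_t)} A^\pi(s_t,a)\bigr]$. Grouping by state and using the definition of $d^{\pi'}$ gives exactly $\frac{1}{1-\gamma}\E_{s\sim d^{\pi'}}\E_{a\sim\pi'}[A^\pi(s,a)]$.

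The only step that needs care is this interchange and the vanishing boundary term. Both follow immediately from bounded rewards and $\gamma < 1$. For continuous action spaces, I also need measurability of $Q^\pi$ and $V^\pi$ so that the conditional expectations in Step 2 are well defined. This is where I expect any subtlety to lie, and I would state it as part of the regularity assumptions rather than prove it.
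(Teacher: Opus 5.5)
Your proposal is correct and follows the same route as the paper, which defers the proof to \citet{kakade2002approximately} with the remark that it telescopes the discounted return. Your argument carries out that telescoping: along trajectories of $\pi'$, each increment $\gamma^t\bigl(r_t + \gamma V^\pi(s_{t+1}) - V^\pi(s_t)\bigr)$ has conditional expectation $\gamma^t A^\pi(s_t,a_t)$, and regrouping with the normalized $d^{\pi'}$ gives the formula. The bounded-reward condition you need for the limit is the paper's (A4).
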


Equation~\ref{eq:pdl-full} is exact and holds without functional-form or smoothness assumptions on the policies.
The proof telescopes the discounted return; see \citet{kakade2002approximately}.

\paragraph{Regularity conditions.}

\begin{assumption}[Continuous action space]
\label{asm:continuous}
\leavevmode
\begin{enumerate}
\item[\textup{(A1)}]
  $\Qpi(s,a)$ is twice continuously differentiable in $a$ in a neighborhood of $a = \mu(s)$ of a common radius $r_0 > 0$ for all $s$, with $\|\nabla^2_a \Qpi\| \leq M_Q$ and $\|\nabla_a \Qpi(s, \mu(s))\| \leq G_Q$ for all $s$.

\item[\textup{(A2)}]
  The transition kernel satisfies $\mathrm{TV}\bigl(P(\cdot|s,a_1),\, P(\cdot|s,a_2)\bigr) \leq L_P \|a_1 - a_2\|$ for all $s, a_1, a_2$.

\item[\textup{(A3)}]
  The shaping head output is bounded, $\|h(s)\| \leq B_h$ for all $s$.

\item[\textup{(A4)}]
  Rewards are bounded, $|r(s,a)| \leq R_{\max}$.

\item[\textup{(A5)}]
  The action gradient $\nabla_a \Qpi(s, \mu(s))$ is continuous in the parameters of the base policy, for every $s$.
\end{enumerate}
\end{assumption}

\begin{assumption}[Discrete action space]
\label{asm:discrete}
The action space is finite, $|\cA| < \infty$, and (A3) and (A4) of Assumption~\ref{asm:continuous} hold.
The state-distribution shift is then bounded directly through the total-variation distance between policies, which is $O(\alpha)$ by the Lipschitz property of the softmax, without requiring smoothness of the transition kernel in the action argument.
For the limit in Appendix~\ref{app:corollary}, $\Qpi(s, a)$ is continuous in the parameters of the base policy, for every $s$ and $a$.
\end{assumption}

\paragraph{Derivation.}

\begin{proof}[Proof of Theorem~\ref{thm:first-order}]

\medskip\noindent\textbf{Continuous deterministic policies.}

\emph{Step 1: apply the performance difference lemma.}\;
With $\pi' = \pi_\alpha$, the shaped policy at gate value $\alpha$, and $\pi = \pimu$,
\begin{equation}\label{eq:pdl-applied-cont}
  G(\alpha) - G(0)
  = \frac{1}{1-\gamma}\;
  \E_{s \sim d^{\pi_\alpha}}\!\bigl[
    A^{\pimu}\!\bigl(s,\, \mu(s) + \alpha\, h(s)\bigr)
  \bigr].
\end{equation}

\emph{Step 2: separate the state-distribution shift.}\;
The difference $d^{\pi_\alpha} - \dpimu$ introduces a second-order correction:
\begin{equation}\label{eq:dist-shift}
  \E_{d^{\pi_\alpha}}[\psi(s)]
  = \E_{\dpimu}[\psi(s)]
  + \underbrace{
    \E_{d^{\pi_\alpha} - \dpimu}[\psi(s)]
  }_{O(\alpha)}.
\end{equation}
Under Assumption~\ref{asm:continuous}(A2) the total-variation distance satisfies $\mathrm{TV}(d^{\pi_\alpha}, \dpimu) = O\!\bigl(\gamma L_P \alpha B_h / (1-\gamma)\bigr)$, obtained by recursively bounding the per-step distribution shift with the Lipschitz condition on $P$.
The integrand $A^{\pimu}(s, \mu(s) + \alpha h(s))$ vanishes at $\alpha = 0$ and is $O(\alpha)$ by the Taylor expansion of Step~3, so the cross-term is $O(\alpha) \times O(\alpha) = O(\alpha^2)$:
\begin{equation}\label{eq:dist-shift-bound}
  \E_{d^{\pi_\alpha} - \dpimu}\!\bigl[
    A^{\pimu}\!\bigl(s,\, \mu(s) + \alpha h(s)\bigr)
  \bigr]
  = O\!\left(\frac{\gamma L_P \alpha^2 B_h^2
    G_Q}{1-\gamma}\right).
\end{equation}

\emph{Step 3: Taylor-expand the advantage.}\;
The advantage at the base action is zero, $A^{\pimu}(s, \mu(s)) = \Qpi(s, \mu(s)) - V^{\pimu}(s) = 0$.
Under Assumption~\ref{asm:continuous}(A1) the first-order expansion around $a = \mu(s)$ gives
\begin{align}\label{eq:taylor-advantage}
  A^{\pimu}\!\bigl(s,\, \mu(s) + \alpha h(s)\bigr)
  &= \nabla_a \Qpi\!(s, \mu(s))^\top \cdot \alpha \, h(s)
  + O(\alpha^2 M_Q B_h^2),
\end{align}
using $\nabla_a A^{\pimu} = \nabla_a \Qpi$, since $V^{\pimu}(s)$ does not depend on $a$.

\emph{Step 4: combine.}\;
Substituting~\eqref{eq:dist-shift-bound} and~\eqref{eq:taylor-advantage} into~\eqref{eq:pdl-applied-cont},
\begin{align}
  G(\alpha) - G(0)
  &= \frac{1}{1-\gamma}\;
  \E_{\dpimu}\!\bigl[
    \alpha\, \nabla_a \Qpi\!(s,\mu(s))^\top h(s)
  \bigr]
  + O(\alpha^2) \notag \\
  &= \frac{\alpha}{1-\gamma}\;
  \E_{\dpimu}\!\bigl[\Phi_h(s)\bigr]
  + R(\alpha),
  \label{eq:thm1-cont-final}
\end{align}
where $\Phi_h(s) = \nabla_a \Qpi\!(s, \mu(s))^\top h(s)$ is the continuous case of~\eqref{eq:functional}, and the remainder satisfies $R(\alpha) = O(\alpha^2)$ with leading constant $C_{\text{cont}} = \frac{1}{1-\gamma} \bigl(\tfrac{1}{2}M_Q B_h^2 + \tfrac{2\gamma L_P G_Q B_h^2}{1-\gamma}\bigr)$, the first term being the Taylor remainder of Step~3 and the second the state-distribution shift of Step~2.

\emph{Output maps other than the identity.}\;
TD3 acts with $a = f(\mu(s) + \alpha h(s))$ for $f$ a $\tanh$ scaled to the action bounds (Equation~\ref{eq:shaped}), and the four steps go through for any output map that is $C^2$ with $\|J_f\| \le L_f$ and $\|\nabla^2 f\| \le M_f$, since $f(\mu + \alpha h) - f(\mu) = \alpha\, J_f(\mu)\, h + O(\alpha^2 M_f B_h^2)$.
Step~2 holds with $B_h$ replaced by $L_f B_h$; Step~3 expands $\Qpi$ around $a = f(\mu(s))$, where (A1) is now taken, with $h$ replaced by $J_f(\mu(s))\, h$ and the remainder constant $\tfrac{1}{2} M_Q B_h^2$ by $\tfrac{1}{2}\bigl(M_Q L_f^2 + M_f G_Q\bigr) B_h^2$; and the first-order term becomes $\Phi_h(s) = \nabla_a \Qpi\bigl(s, f(\mu(s))\bigr)^\top J_f(\mu(s))\, h(s)$, which is the derivative in~\eqref{eq:functional} by the chain rule.
The identity map is the case $J_f = I$, $M_f = 0$; for the scaled $\tanh$, $L_f$ is the action scale and $M_f$ is $4/(3\sqrt{3}) \approx 0.77$ times it.

\medskip\noindent\textbf{Softmax policies over discrete actions.}

\emph{Step 1: apply the performance difference lemma.}\;
With $\pi_\alpha(a|s) = \mathrm{softmax}\bigl(\mu(s) + \alpha \cdot h(s)\bigr)_a$, where $\mu(s)$ is the vector of base logits,
\begin{equation}\label{eq:pdl-applied-disc}
  G(\alpha) - G(0)
  = \frac{1}{1-\gamma}\;
  \E_{s \sim d^{\pi_\alpha}}\!\left[
    \sum_a \bigl(\pi_\alpha(a|s) - \pimu(a|s)\bigr)\,
    \Qpi(s,a)
  \right].
\end{equation}

\emph{Step 2: separate the state-distribution shift.}\;
Under Assumption~\ref{asm:discrete}, $\mathrm{TV}(\pi_\alpha(\cdot|s), \pimu(\cdot|s)) = O(\alpha)$ by the Lipschitz property of the softmax, so $\mathrm{TV}(d^{\pi_\alpha}, \dpimu) = O(\gamma\alpha/(1-\gamma))$.
The cross-term is again $O(\alpha^2)$.

\emph{Step 3: expand the policy difference.}\;
The softmax Jacobian $\partial \pi(a|s)/\partial \mu_b(s) = \pi(a|s)\bigl(\mathbf{1}\{a = b\} - \pi(b|s)\bigr)$ gives
\begin{equation}\label{eq:softmax-expansion}
  \pi_\alpha(a|s) - \pimu(a|s)
  = \alpha \cdot \pimu(a|s)\,
  \bigl(h_a(s) - \E_{\pimu}[h(s)]\bigr)
  + O(\alpha^2),
\end{equation}
and substituting into the inner sum,
\begin{align}
  \sum_a \bigl(\pi_\alpha(a|s) - \pimu(a|s)\bigr)\, \Qpi(s,a)
  &= \alpha \sum_a \pimu(a|s)\,
  \bigl(h_a - \E_{\pimu}[h]\bigr)\, \Qpi(s,a)
  + O(\alpha^2) \notag \\
  &= \alpha\, \operatorname{Cov}_{\pimu(\cdot|s)}\!\bigl(h(s),\, \Qpi(s,\cdot)\bigr)
  + O(\alpha^2).
  \label{eq:cov-expansion}
\end{align}

\emph{Step 4: combine.}\;
\begin{equation}\label{eq:thm1-disc-final}
  G(\alpha) - G(0)
  = \frac{\alpha}{1-\gamma}\;
  \E_{\dpimu}\!\bigl[\Phi_h(s)\bigr]
  + R(\alpha),
\end{equation}
where $\Phi_h(s) = \operatorname{Cov}_{\pimu(\cdot|s)}\!\bigl(h(s),\, \Qpi(s,\cdot)\bigr)$ is the softmax case of~\eqref{eq:functional} and $|R(\alpha)| \leq C_{\text{disc}} \cdot \alpha^2$ with $C_{\text{disc}} = B_h^2 R_{\max} / (1-\gamma)^3$.
\end{proof}

\subsection{Self-diminishing corollary}
\label{app:corollary}

\begin{corollary}[Self-diminishing property]
\label{cor:self-diminishing}
If the base policy $\pimu$ is optimal, then $\Phi_h(s) = 0$ for all shaping heads $h$ and all states $s$, for deterministic and softmax policies; for a reparameterized Gaussian actor the gradient of the full actor loss with respect to $h$ vanishes at the soft optimum.
\end{corollary}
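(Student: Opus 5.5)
The plan is to treat the three policy classes separately. For the deterministic and softmax cases, the argument reduces to an optimality condition on $\Qpi$ at the base action. For the Gaussian case, it reduces to a stationarity condition of the soft objective.

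\emph{Deterministic case.} If $\pimu$ is optimal, then $\Qpi = Q^*$ and $V^{\pimu}(s) = \max_a Q^*(s,a)$. For a deterministic optimal policy, $V^{\pimu}(s) = \Qpi(s, f(\mu(s)))$, so the action $f(\mu(s))$ is a maximizer of $\Qpi(s,\cdot)$.

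\begin{itemize}
\item When $f(\mu(s))$ lies in the interior of the action set, the first-order condition gives $\nabla_a \Qpi(s, f(\mu(s))) = 0$. The differentiability needed here is supplied by (A1).
\item When $f(\mu(s))$ lies on the boundary, I use the structure of $f$ instead. Since $f$ is the scaled $\tanh$, it maps onto the open interior, so $f(\mu(s))$ is always interior. For the identity map, I assume the maximizer is interior, which is the setting in which (A1) gives a full neighborhood.
\end{itemize}

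In either case $\Phi_h(s) = \nabla_a \Qpi(s, f(\mu))^\top J_f(\mu)\, h(s) = 0$ for every $h$.

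\emph{Softmax case.} Optimality means $\pimu(\cdot\mid s)$ is supported on $\arg\max_a \Qpi(s,a)$. Because the softmax has full support, every action then attains the maximum, so $\Qpi(s,\cdot)$ is constant on the support of $\pimu(\cdot\mid s)$. The covariance of $h(s)$ with a $\pimu$-almost-surely constant function vanishes, so $\Phi_h(s) = \operatorname{Cov}_{\pimu}(h, \Qpi) = 0$.

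This step is where I expect the main obstacle. An exactly optimal softmax policy exists only when every action is optimal at every state, which is a degenerate situation. So the honest reading of the corollary is a limit statement: as the base parameters approach a sequence along which $\pimu$ tends to optimal, $\Phi_h(s) \to 0$. I would prove that version using the continuity clause of Assumption~\ref{asm:discrete}. The key bound is $|\operatorname{Cov}| \le B_h \cdot \E_{\pimu}\bigl|\Qpi(s,a) - \max_b \Qpi(s,b)\bigr|$, and the right-hand side vanishes as the suboptimal probability mass goes to zero. The deterministic limit is handled the same way, using (A5).

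\emph{Reparameterized Gaussian case.} Write the SAC actor loss as $\E_s \E_\xi[\beta \log \pi(a\mid s) - Q(s,a)]$ with $a = f(\mu + \alpha h + \sigma\xi)$. The soft-optimal policy minimizes this loss per state over all distributions, and in particular over the Gaussian family, at the soft optimum $Q = Q^*_{\text{soft}}$. Perturbing the mean pre-activation by $\alpha h$ keeps the policy inside the family.

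It follows that the derivative of the per-state loss with respect to the mean pre-activation, evaluated at the optimum, is zero. By the chain rule, the gradient with respect to $h$ is $\alpha$ times that derivative, and the gradient with respect to $\alpha$ is its inner product with $h$. Both vanish, as claimed.

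The one care point is that stationarity here is taken within the Gaussian family at the soft optimum. I therefore state the hypothesis so that the soft optimum is attained by the base Gaussian actor, which is exactly the phrasing the corollary uses.
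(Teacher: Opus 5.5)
Your proposal is correct and follows the paper's proof in all three cases: full support forcing $\Qpi(s,\cdot)$ to be constant for the softmax (with the same max-subtracted covariance bound for the limit), stationarity of the soft objective in the mean pre-activation for the Gaussian actor, and a first-order optimality condition for the deterministic actor. There are two small differences. The paper applies that first-order condition to $u \mapsto \Qpi(s, f(u))$ on the unconstrained pre-activation space, which gives $J_f(\mu(s))^\top \nabla_a \Qpi(s, f(\mu(s))) = 0$ directly and makes your interiority discussion unnecessary. Separately, your covariance bound should carry $2B_h$ rather than $B_h$, which does not affect the limit.
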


\begin{proof}
\emph{Continuous deterministic.}\;
At optimality the first-order necessary condition on the pre-activation gives
\[
  J_f(\mu(s))^\top \nabla_a \Qpi(s, f(\mu(s))) = 0,
\]
so $\Phi_h(s) = \nabla_a \Qpi(s, f(\mu(s)))^\top J_f(\mu(s))\, h(s) = 0$; with the identity map this reads $\nabla_a \Qpi(s, \mu(s)) = 0$ and $\Phi_h(s) = \nabla_a \Qpi(s, \mu(s))^\top h(s) = 0$.

\emph{Discrete.}\;
A softmax policy has full support, so it is optimal at $s$ only if every action attains $\max_a \Qpi(s,a)$, and then $\Qpi(s,\cdot)$ is constant and the covariance vanishes. As the base policy approaches optimality its mass concentrates on $\cA^*_s = \argmax_a \Qpi(s,a)$, and writing the covariance with $\max_b \Qpi(s,b)$ subtracted from $\Qpi(s,\cdot)$ gives $|\Phi_h(s)| \le 4 B_h R_{\max}\, \pimu(\cA \setminus \cA^*_s \mid s)/(1-\gamma) \to 0$, under (A3) and (A4).

\emph{SAC (entropy-regularized).}\;
At the soft optimum the stationarity condition for the maximum-entropy objective gives $\nabla_\mu\bigl(\E[\Qpi] + \beta\, \mathcal{H}(\pi)\bigr) = 0$.
The $Q$-part of this gradient, projected onto $h$, is precisely $\Phi_h^{\text{SAC}}(s)$, and the entropy part is $\beta\, (\nabla_\mu \mathcal{H})^\top h$.
At soft optimality these two terms cancel, so the total gradient signal reaching $h$ from the SAC actor loss vanishes.
While $\Phi_h^{\text{SAC}}(s)$ alone need not be exactly zero, since it equals $-\beta(\nabla_\mu \mathcal{H})^\top h$, the training dynamics drive $h$ through the full actor loss, and the gradient of that full loss with respect to $h$ vanishes at the soft optimum.
\end{proof}

The corollary extends from an optimal base to one that approaches optimality under Assumption~\ref{asm:continuous}(A5), taken at $a = f(\mu(s))$ for an output map other than the identity, and its discrete counterpart in Assumption~\ref{asm:discrete}.
For a fixed head $h$ and state $s$, $\Phi_h(s)$ is then continuous in the parameters of the base policy, since $J_f$ is continuous, $\mu(s)$ is continuous in those parameters, and the softmax covariance is continuous in the logits and in $\Qpi(s, \cdot)$; hence $\Phi_h(s) \to 0$ as the parameters converge to those of an optimal base policy.
Under SAC the statement applies to the gradient of the full actor loss with respect to $h$, as in the SAC case of the proof, and not to $\Phi_h^{\text{SAC}}$ alone.

The corollary shows that the per-state functional is not an independent signal but a measure of the base policy's sub-optimality.
As the base policy improves through training, $\Phi_h$ diminishes, weakening the gradient signal to both $h$ and $\alpha$.

\subsection{Flat line}
\label{app:flat}

\begin{proof}[Proof of Proposition~\ref{prop:flat}]
The merged readout $(\bar W, \bar b)$ is unchanged along the line, and the trunk $\phi$ is unchanged, so the shaped action, hence the loss, is unchanged.
Differentiating $\mathcal{L}$ twice with respect to $W_h$ through $\bar W = W_\mu + \alpha W_h$ gives $\nabla^2_{W_h} \mathcal{L} = \alpha^2\, \nabla^2_{\bar W} \mathcal{L}$, whose trace is $\alpha^2$ times a quantity that depends on $\theta$ only through $(\phi, \bar W, \bar b)$, all constant along the line; at a minimizer of $\mathcal{L}$ the Hessian with respect to $(\bar W, \bar b)$ is positive semidefinite, because any change of $\bar W$ can be made through $W_\mu$ alone, so that quantity is nonnegative and the trace is least at $\alpha + \lambda = 0$.
\end{proof}

\subsection{Drift along the line}
\label{app:drift}

\paragraph{Setting.}
The trunk is fixed and its features are $z(s) \in \R^k$ with the constant feature appended, so that biases are columns of the weight matrices; $\Sigma = \E[z z^\top]$, assumed nonsingular, and $S = \operatorname{tr}\Sigma$, expectations over a fixed set of training states.
The parameters are $\theta = (W_\mu, W_h, \alpha)$ with $W_\mu, W_h \in \R^{m \times k}$, the model is $f_\theta(s) = (W_\mu + \alpha W_h)\, z(s)$, and the loss is $\mathcal{L}(\theta) = \tfrac{1}{2}\E\|f_\theta(s) - y(s)\|^2$ with the target perturbed by independent noise at every step of stochastic gradient descent.
Let $\Gamma$ be the zero-loss manifold, assumed non-empty, and let $\bar W = W_\mu + \alpha W_h$; on $\Gamma$ the Hessian of $\mathcal{L}$ equals the Gauss--Newton matrix $\E[J^\top J]$ with $J = \partial f_\theta / \partial \theta$.
\citet{li2022zeroloss}, Corollary 5.2, give the limiting dynamics of label-noise stochastic gradient descent on $\Gamma$, as the step size $\eta \to 0$ over $\Theta(\eta^{-2})$ steps, as the projected gradient flow $\dot\theta = -\,c\, P_\theta \nabla R(\theta)$, where $R(\theta) = \operatorname{tr}\E[J^\top J]$, $P_\theta$ is the projection onto the tangent space of $\Gamma$, and $c > 0$ collects the step size and the noise variance.
The tangent space is the null space of the Hessian, and the normal space its range, so $P_\theta$ is the orthogonal projection.

\paragraph{The trace.}
The blocks of $J$ are $\partial f/\partial W_\mu = z^\top \otimes I_m$, $\partial f/\partial W_h = \alpha\, z^\top \otimes I_m$, and $\partial f/\partial \alpha = W_h z$.
Hence
\begin{equation}\label{eq:trace-on-line}
  R(\theta) = m S + \alpha^2 m S + \E\|W_h z\|^2 = m S\,(1 + \alpha^2) + \operatorname{tr}(W_h \Sigma W_h^\top),
\end{equation}
which does not depend on $W_\mu$; with a trainable trunk the additional terms depend on the head only through $\bar W$ and are constant along the line.

\paragraph{The tangent space and the projection.}
A displacement $(dW_\mu, dW_h, d\alpha)$ is tangent to $\Gamma$ when it leaves $\bar W$ unchanged, $dW_\mu + \alpha\, dW_h + d\alpha\, W_h = 0$.
Two families span it: the line of Proposition~\ref{prop:flat}, $u = (-W_h, 0, 1)$, of squared length $1 + \|W_h\|_F^2$, and the head directions $v_V = (-\alpha V, V, 0)$ for $V \in \R^{m \times k}$.
The gradient of $R$ has components $\partial R/\partial W_\mu = 0$, $\partial R/\partial W_h = 2 W_h \Sigma$, and $\partial R/\partial \alpha = 2 m S \alpha$.
The two families are not orthogonal, $\langle u, v_V \rangle = \alpha \langle W_h, V \rangle$, and because the head gradient is of order one this coupling contributes to the gate's equation at the same order as the gate gradient itself. Orthogonalizing $u$ against the head directions gives $u_\perp = \bigl(-W_h/(1+\alpha^2),\, -\alpha W_h/(1+\alpha^2),\, 1\bigr)$ with $\|u_\perp\|^2 = (1 + \alpha^2 + \|W_h\|_F^2)/(1+\alpha^2)$, and the $\alpha$ component of the exact projection is $\langle \nabla R, u_\perp \rangle / \|u_\perp\|^2 = 2\alpha\,\bigl(m S (1+\alpha^2) - \operatorname{tr}(W_h \Sigma W_h^\top)\bigr) / (1 + \alpha^2 + \|W_h\|_F^2)$; solving the normal-space equation $(1+\alpha^2) M + \langle W_h, M\rangle W_h = 2\alpha (W_h \Sigma + m S\, W_h)$ for the normal component gives the same value, and the head component $\bigl(2 W_h \Sigma - \alpha\,(P_\theta \nabla R)_\alpha W_h\bigr)/(1+\alpha^2)$, which is the full head gradient $2 W_h \Sigma$ to leading order in $\alpha$.
Reading off the $\alpha$ and $W_h$ components of $-\,c\,P_\theta \nabla R$ gives, with the factor $2$ absorbed into $c$, the exact gate equation and the head equation to leading order in $\alpha$,
\begin{equation}\label{eq:rates-appendix}
  \dot\alpha = -\,c\,\frac{m S (1 + \alpha^2) - \operatorname{tr}(W_h \Sigma W_h^\top)}{1 + \alpha^2 + \|W_h\|_F^2}\,\alpha, \qquad \dot W_h = -\,c\, W_h \Sigma, \qquad \dot W_\mu = -\,\dot\alpha\, W_h - \alpha\, \dot W_h,
\end{equation}
the last equation being the statement that the flow stays on $\Gamma$.
Dropping the $\alpha^2$ terms of the first equation gives Proposition~\ref{prop:drift}.

\paragraph{Remarks.}
The gate decays at the rate of $m S - \operatorname{tr}(W_h \Sigma W_h^\top)$, the summed curvature of all $m k$ head weights less the head's output power, divided by the squared length of the line's tangent vector; the rate is positive whenever $\|W_h\|_2^2 < m$, since $\operatorname{tr}(W_h \Sigma W_h^\top) \le \|W_h\|_2^2 S$, and when the output power exceeds $m S$ the gate first rises and then decays: the trace restricted to $\Gamma$ is a strictly convex function of $(\alpha, W_h)$ whose only critical point is $(0, 0)$, and the projected flow is the gradient flow of that coercive trace on $\Gamma$, so from every starting point it converges to $\alpha = 0$ and $W_h = 0$. The head decays per eigen-direction of $\Sigma$ at the rate of one eigenvalue.
Since $S$ is the sum of the $k$ eigenvalues, the ratio of the two rates is $\bigl(m k - \operatorname{tr}(W_h \Sigma W_h^\top)/\bar\lambda\bigr)$ times the ratio of the mean eigenvalue $\bar\lambda$ to the eigenvalue of the direction in question, divided by $1 + \|W_h\|_F^2$; for a small head it is of the order of the number of weights in the head.
The mechanism behind the head's slow decay is visible in the projection: the noise on $\alpha$ has variance proportional to $\E\|W_h z\|^2$ and does not vanish at $\alpha = 0$; a kick of size $\zeta$ in $\alpha$ moves $\bar W$ by $\zeta W_h$, and the restoring gradient on $W_h$ is then $\zeta\,(\zeta W_h \Sigma)$, whose sign does not depend on the sign of $\zeta$.
The coupling $\alpha \langle W_h, V \rangle$ between the two tangent families enters the gate's equation at first order in $\alpha$, which is the output-power term above, and the head's equation at order $\alpha^2$; while the gate is of order one the latter is comparable to the head term when $\alpha^2 (m S - \operatorname{tr}(W_h \Sigma W_h^\top))/(1 + \|W_h\|_F^2)$ is comparable to the eigenvalues of $\Sigma$, it enters with the opposite sign, and it dominates until the gate has closed.
The head's norm is therefore stationary or slightly growing while the gate decays, and shrinks only afterwards; the minimal model measures $+2.1 \times 10^{-9}$ per step for $\ln\|W_h\|_F$ at $|\alpha| \approx 0.2$ against $+1.9 \times 10^{-9}$ predicted with the coupling kept (Appendix~\ref{app:toy}).
An inexpressible head that the targets need has no zero-loss point with $\alpha = 0$, so the flow has no such point to reach; a frozen base removes $W_\mu$ from the coordinates, so the zero-loss set is $\{\alpha W_h = D\}$ for the fixed offset $D$ that the base needs, which contains no point with $\alpha = 0$; along it the trace over the trained parameters, $\alpha^2 m S + \operatorname{tr}(D \Sigma D^\top)/\alpha^2$, is minimized at $\alpha^4 = \operatorname{tr}(D \Sigma D^\top)/(m S)$, where the gate stays.

\section{Minimal model of absorption}
\label{app:toy}

\paragraph{Model.}
States are $s \sim \mathcal{N}(0, I_{16})$.
A trainable trunk $\phi$ (two hidden layers of 64 units, ReLU) computes $z = \phi(s)$; the base head is linear, $\mu = W_\mu z + b_\mu \in \R^4$; the shaping head is a zero-initialized linear map of $z$; the gate is initialized as $\alpha \sim U(-0.01, 0.01)$; the action is $a = \mu + \alpha h$ with no squashing, so that the merge identity of \S\ref{sec:arms} is exact.
The critic is exact and fixed, $Q(s, a) = -\tfrac{1}{2}\|a - g(s)\|^2$, where $g$ is a fixed random two-layer tanh network with 32 hidden units and outputs scaled to unit root mean square, and the loss is $\mathcal{L} = \tfrac{1}{2}\E\|a - g(s)\|^2$, minimized over every trainable parameter by Adam with learning rate $3 \times 10^{-4}$, the optimizer of the agents.
The seven configurations of Table~\ref{tab:toy} are the first five arms of \S\ref{sec:arms} plus two MLP-base controls, an MLP head and a linear head on a one-hidden-layer base head of 32 units; the table's last row repeats the shared arm under plain SGD.
Gradient noise is the controlled variable: N0 uses one fixed batch of 4096 states, so the gradient is exact and training is deterministic (two runs of the same seed agree bitwise); N1 draws a fresh batch of 64 states at every step; N2 adds Gaussian noise of standard deviation 0.3 to the target $g(s)$ at every step, a stand-in for a noisy critic.
Every configuration runs for 100K steps with 3 seeds; $\|\delta\|$, $\|h\|$, and the removal cost $\mathcal{L}(\mu) - \mathcal{L}(\mu + \alpha h)$ are logged every 500 steps on a fixed evaluation set of 2048 states; the residual $r$ uses the last 5\% of steps over the peak, with completion at $r \le 0.10$ as in the paper.

\begin{table}[htbp]
  \caption{%
    \textbf{Minimal model.} Terminal residual $r$ (median over 3 seeds), number of seeds completing ($r \le 0.10$), and the fraction of its peak norm that the head retains at the end ($\|h\|_{\text{end}} / \|h\|_{\text{peak}}$, median), for each configuration under three noise levels; Adam in every row but the last, which uses plain SGD (learning rate $10^{-2}$).
  }
  \label{tab:toy}
  \centering
  \small
  \setlength{\tabcolsep}{5pt}\begin{tabular}{@{}lcccccclcc@{}}
\toprule
& \multicolumn{3}{c}{N0: full batch} & \multicolumn{3}{c}{N1: minibatch} & \multicolumn{3}{c}{N2: minibatch + target noise} \\
\cmidrule(lr){2-4}\cmidrule(lr){5-7}\cmidrule(lr){8-10}
Arm & $r$ & done & $h$ kept & $r$ & done & $h$ kept & \multicolumn{1}{c}{$r$} & done & $h$ kept \\
\midrule
Shared, linear head & 0.997 & 0/3 & 0.84 & 0.695 & 0/3 & 0.96 & 0.0014 & 3/3 & 0.14 \\
Detach & 0.996 & 0/3 & 0.85 & 0.687 & 0/3 & 0.98 & 0.0012 & 3/3 & 0.13 \\
Independent trunk & 0.999 & 0/3 & 0.66 & 0.996 & 0/3 & 0.80 & 0.979 & 0/3 & 0.99 \\
Frozen base & 1.000 & 0/3 & 0.64 & 0.993 & 0/3 & 0.51 & 0.993 & 0/3 & 0.51 \\
MLP head, linear base & 0.995 & 0/3 & 0.60 & 0.777 & 0/3 & 0.87 & 0.086 & 2/3 & 0.13 \\
Linear head, MLP base & 0.999 & 0/3 & 0.91 & 0.875 & 0/3 & 0.99 & 0.983 & 0/3 & 0.99 \\
MLP head, MLP base & 0.996 & 0/3 & 0.84 & 0.866 & 0/3 & 0.70 & 0.120 & 1/3 & 0.22 \\
\addlinespace
Shared, linear head (SGD) & \multicolumn{9}{l}{non-engaged at every noise level: peak amplitude below $6 \times 10^{-4}$} \\
\bottomrule
\end{tabular}

\end{table}

\paragraph{What the model shows.}
Without gradient noise nothing returns (Figure~\ref{fig:toy}): under N0 every Adam configuration ends with $r \ge 0.995$, the shared head included, whose $\alpha$ rises throughout the run, to $0.43$ at 100K.
With minibatch noise the shared head turns over: $\alpha$ peaks near $0.28$ in the first half of the run and declines from there, to $0.20$ at 100K, with $r = 0.70$ still falling at the budget and the head retaining 96\% of its norm.
With target noise added it returns within the budget, $r = 0.0014$ and completion at about 58K steps in every seed, and there the head's norm falls as well, to 14\% of its peak.
The drift therefore acts on $\alpha$ first, as the rate ratio of \S\ref{sec:flat-line} predicts, and reaches $W_h$ once the noise is strong; the trained agents, whose $|\alpha|$ falls by 98\% while $\|h\|$ keeps 20\% to 94\% (\S\ref{sec:arc}), sit between the two noisy regimes.
Detaching the head from the trunk changes nothing at any noise level (differences in $r$ below 0.01).
The independent-trunk and frozen-base heads never return ($r \ge 0.97$ and $\ge 0.99$), and removing the head from the frozen base costs 1.65 in loss against at most 0.03 for the shared head.

\paragraph{What decides absorption is exact expressibility.}
The linear head on the one-hidden-layer base is the decisive control: the base can approximate a linear function of $z$ but cannot reproduce $\mu + \alpha h$ exactly, and its head keeps its amplitude at every noise level ($r$ of 0.88 to 1.00, head norm retained at 91\% to 99\%), even under the target noise that closes the linear-on-linear gate in every seed.
The MLP head on the MLP base behaves the same way at N1 ($r = 0.87$) and only partly returns at N2 ($r = 0.12$, one seed of three); a sum of two one-hidden-layer networks of the same width is not a network of that width, so no exact line exists there either.
The MLP head on the linear base is not absorbed under minibatch noise ($r = 0.78$) but does return under target noise ($r = 0.086$, two seeds of three) with its norm reduced to 13\% of peak and a removal cost of $0.0004$: the branch collapses while the base learns the task on its own, the stochastic collapse of a redundant branch~\citep{chen2023stochasticcollapse} rather than absorption along a flat line.
The diagnostic of \S\ref{sec:removal} holds in that case too: the residual is small and the removal free, though the mechanism differs; the agents' MLP heads do not collapse at three times the budget (Appendix~\ref{app:p2}).

\paragraph{Predictions and the optimizer.}
The no-noise, detach, independent-trunk, and frozen-base predictions hold.
The shared head turns over under minibatch noise and its gate closes under target noise, as reported above, and the two MLP-base controls are not absorbed; they isolate exact expressibility as the condition, which the agent test of Appendix~\ref{app:p1bc} confirms.
Under plain SGD at learning rate $10^{-2}$ the head never engages: the peak amplitude stays below $6 \times 10^{-4}$ against 0.15 to 0.62 under Adam, while the base alone drives the loss from 2.0 to 0.01.
The zero-initialized head sits at a point where the gradient with respect to $W_h$ carries a factor $\alpha$ and the gradient with respect to $\alpha$ a factor $\|h\|$, both near zero; Adam's per-parameter normalization turns those small but consistent gradients into steps of the size of the learning rate, and plain SGD does not.

\paragraph{Drift check.}
To test Proposition~\ref{prop:drift} directly we start on the line.
A linear-head actor is trained by Adam without noise on 512 fixed states until the training loss is below $10^{-6}$, which is reachable because the model has more parameters than constraints, leaving $\alpha \approx -0.21$ and $\|W_h\|_F \approx 3.4$; from there training continues with plain stochastic gradient descent on the full batch, label noise of standard deviation $\sigma$ added to the target at every step, the trunk frozen so that $\Sigma$ is fixed, three seeds.
At step size $10^{-2}$ over 200K steps the measured rate of $\ln|\alpha|$ is $-8.2 \times 10^{-8}$ per step at $\sigma = 0.3$ and $-9.9 \times 10^{-9}$ at $\sigma = 0.1$, against $-7.5 \times 10^{-8}$ and $-8.3 \times 10^{-9}$ from the projected flow of Appendix~\ref{app:drift}, output-power term included, evaluated along the trajectory, ratios of 1.09 and 1.19, and the two rates differ by a factor of 8.25 for a ninefold change in $\sigma^2$.
Over this budget the gate moves by at most 2\%, which fixes its initial rate; the head's norm does not shrink but grows slightly, at $+2.1 \times 10^{-9}$ per step against $+1.9 \times 10^{-9}$ predicted with the coupling of Appendix~\ref{app:drift} kept.
With $\sigma = 0$ nothing moves, and the MLP head, the frozen base, and the linear head on an MLP base keep at least 96\% of $|\alpha|$ under the same noise, the last of them growing by 25\%.

A run from the same starting points raises the step size to $0.05$ and the budget to 3M steps so that the drift outruns the gate's own fluctuations.
There the gate decays exponentially: $\ln|\alpha|$ is linear in the step with $R^2 = 0.996$ at $\sigma = 0.3$ and $0.990$ at $\sigma = 0.45$, reaching $|\alpha| < 0.02$ after 0.80M and 0.36M steps, at rates of $-2.8 \times 10^{-6}$ and $-6.2 \times 10^{-6}$ per step against $-1.8 \times 10^{-6}$ and $-4.1 \times 10^{-6}$ from the flow (ratios 1.59 and 1.57; at step size $0.03$ the ratio falls to 1.26, so the excess is the finite-step correction: a discrete-time correction to the flow, which enlarges the stationary variance of each Hessian mode by $1/(1 - \eta\lambda_i/2)$, reproduces the measured ratio of each of the nine runs to within 1\%), and the two rates differ by a factor of 2.24 for a $\sigma^2$ ratio of 2.25.
Once the gate has closed the head begins to shrink, at $-1.8 \times 10^{-8}$ and $-2.7 \times 10^{-8}$ per step ($R^2$ of 0.99 and 0.97), 1.03 and 0.97 times the rate of the flow; the gate therefore decays 150 and 228 times faster than the head against predicted ratios of 101 and 146, and the head still holds 96\% and 92\% of its norm at 3M steps.
Under the same step size and noise the MLP head keeps 65\% to 70\% of its gate with no exponential form ($R^2 \le 0.92$), and the linear head on an MLP base grows its gate by a factor of 3.5 and stays there.

\begin{figure}[htbp]
  \centering
  \includegraphics[width=\textwidth]{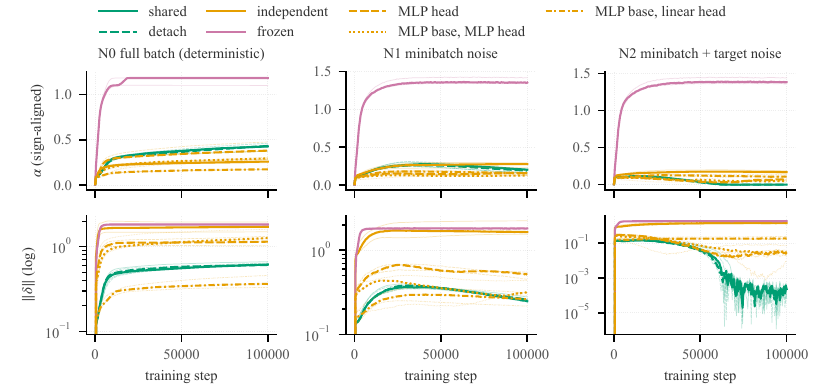}
  \caption{%
    \textbf{Without gradient noise nothing returns.} Minimal model under Adam. Sign-aligned $\alpha$ (top) and $\|\delta\|$ on a log scale (bottom) over 100K steps under the three noise levels; color by whether the base head can reproduce the offset, green where it can, orange where it cannot, pink where the base is frozen, with the line style separating arms inside a color; thin lines are seeds and thick lines the median.
    Under N0 nothing returns; under N1 the shared and detached heads turn over; under N2 they return to zero, the MLP head on the linear base collapses, and the linear head on the MLP base stays.
  }
  \label{fig:toy}
\end{figure}
\FloatBarrier

\FloatBarrier
\section{A nonlinear base head, with and without a linear path}
\label{app:p1bc}

The minimal model separates exact reproduction from approximation with its decisive control, the linear head on the MLP base, which keeps its amplitude at every noise level (Figure~\ref{fig:toy}).
This appendix runs that separation on the agents, on the mechanism suite and the protocol of \S\ref{sec:experiments}: 6 environments, TD3 and SAC, 5 seeds, one million steps, 120 runs.
In both arms the base head is a one-hidden-layer network of 64 units on the shared representation and the shaping head stays linear, so the offset is linear and the base head is not; in the second arm a zero-initialized linear map of $z$ is added to the base head's output.
The two arms are the same function at initialization, seed by seed, and differ only in whether the base head's own parameters can reproduce $\mu + \alpha h$ exactly.
With the linear path they can, by $W_{\mathrm{skip}} \mapsto W_{\mathrm{skip}} + \alpha W_h$ and $b_{\mathrm{skip}} \mapsto b_{\mathrm{skip}} + \alpha b_h$; without it they cannot, since the function class of a one-hidden-layer network is not closed under adding an affine map.

\paragraph{Outcome.}
Every cell of both arms engaged, so the two arms are comparable.
Without the linear path the median residual over the twelve cells is 0.786 and the arc completes in none of them.
With it the median residual is 0.012 and the arc completes in nine cells.
The residual is lower with the path than without it in twelve of twelve paired cells.
Removal costs an interquartile mean of $+0.296$ [0.240, 0.360] without the path and $-0.0002$ [$-0.0007$, 0.0004] with it.
Table~\ref{tab:p1bc} gives every cell.
The means over the six environments quoted in \S\ref{sec:mechanism} are the column means of Table~\ref{tab:p1bc}: a residual of 0.60 under TD3 and 0.82 under SAC without the path, and a removal cost of $+0.38$ and $+0.35$ without the path.

\begin{table}[ht]
  \caption{%
    \textbf{A linear path lowers the residual in every cell and the removal cost to within 0.004 of zero.}
    Terminal residual $r$ and removal cost per (environment, algorithm) cell, each the mean over 5 seeds; \emph{base} is the one-hidden-layer base head and $+$\emph{path} the same base head with a zero-initialized linear map added to its output.
    Cost is the normalized return of the shaped actor minus the base actor alone, as in \S\ref{sec:measure}; a value that rounds to zero is printed without a sign.
  }
  \label{tab:p1bc}
  \centering
  \small\setlength{\tabcolsep}{5pt}
\begin{tabular}{lrrrrrrrr}
\toprule
 & \multicolumn{4}{c}{Terminal residual $r$} & \multicolumn{4}{c}{Removal cost} \\
\cmidrule(lr){2-5}\cmidrule(lr){6-9}
 & \multicolumn{2}{c}{TD3} & \multicolumn{2}{c}{SAC} & \multicolumn{2}{c}{TD3} & \multicolumn{2}{c}{SAC} \\
\cmidrule(lr){2-3}\cmidrule(lr){4-5}\cmidrule(lr){6-7}\cmidrule(lr){8-9}
Environment & base & $+$path & base & $+$path & base & $+$path & base & $+$path \\
\midrule
Hopper & 0.476 & 0.005 & 0.902 & 0.183 & $+$0.786 & $-$0.004 & $+$0.559 & $-$0.001 \\
HalfCheetah & 0.765 & 0.006 & 0.954 & 0.008 & $+$0.367 & $+$0.003 & $+$0.401 & $+$0.002 \\
Humanoid & 0.915 & 0.081 & 0.765 & 0.016 & $+$0.947 & $-$0.002 & $+$0.572 & $-$0.001 \\
finger/spin & 0.173 & 0.003 & 0.556 & 0.351 & $+$0.012 & 0.000 & $+$0.004 & $-$0.001 \\
walker/run & 0.452 & 0.003 & 0.956 & 0.008 & $+$0.137 & $+$0.002 & $+$0.323 & 0.000 \\
humanoid/walk & 0.829 & 0.266 & 0.806 & 0.097 & $+$0.007 & $-$0.001 & $+$0.218 & $-$0.002 \\
\bottomrule
\end{tabular}

\end{table}

\FloatBarrier
\section{The MLP shaping head at three million steps}
\label{app:p2}

Section~\ref{sec:mechanism} reports that a one-hidden-layer shaping head is not absorbed within one million steps at any of three widths; the alternative reading is that the trunk could reshape itself until the offset is linear in the representation, and one million steps may not be enough for it.
This appendix runs the width-64 head three times as long on the mechanism suite: 6 environments, TD3 and SAC, 3 seeds, three million steps, 36 runs, everything else as in \S\ref{sec:experiments}.
A reproduction check ties these runs to \S\ref{sec:mechanism}: their residual read at one million steps, averaged per algorithm, must fall within 0.15 of the width-64 values reported there, and it does.

\paragraph{Outcome.}
At one million steps the mean residual over the six environments is 0.59 under TD3 and 0.84 under SAC, against 0.68 and 0.82, the width-64 values of the width sweep of \S\ref{sec:mechanism}, whose widths 16, 64 and 256 give 0.62, 0.68 and 0.58 under TD3 and 0.83, 0.82 and 0.74 under SAC.
At three million steps the median residual over the twelve (environment, algorithm) cells is 0.73 and the arc completes in none.
The median change in residual from one million to three million steps, within the same runs, is a drop of 0.02; in four of the six TD3 cells the residual rises.
Removal costs an interquartile mean of $+0.91$ [0.86, 0.95] of normalized return.
Table~\ref{tab:p2} gives every cell.
Three times the budget leaves the nonlinear head unabsorbed in every cell; with Appendix~\ref{app:p1bc}, which separates the same two readings for a linear offset, the budget reading is closed on both sides.

\begin{table}[ht]
  \caption{%
    \textbf{The MLP shaping head stays unabsorbed at three million steps.}
    Residual $r$ of the same runs read at one million and at three million steps, and removal cost at three million, per (environment, algorithm) cell for the width-64 head, each the mean over 3 seeds.
    Cost is the normalized return of the shaped actor minus the base actor alone, as in \S\ref{sec:measure}.
  }
  \label{tab:p2}
  \centering
  \small\setlength{\tabcolsep}{5pt}
\begin{tabular}{lrrrrrr}
\toprule
 & \multicolumn{3}{c}{TD3} & \multicolumn{3}{c}{SAC} \\
\cmidrule(lr){2-4}\cmidrule(lr){5-7}
Environment & $r$ 1M & $r$ 3M & cost 3M & $r$ 1M & $r$ 3M & cost 3M \\
\midrule
Hopper & 0.597 & 0.735 & $+$0.820 & 0.798 & 0.721 & $+$0.706 \\
HalfCheetah & 0.700 & 0.829 & $+$1.221 & 0.858 & 0.782 & $+$1.337 \\
Humanoid & 0.851 & 0.848 & $+$0.790 & 0.952 & 0.916 & $+$1.177 \\
finger/spin & 0.382 & 0.467 & $+$0.702 & 0.783 & 0.672 & $+$1.092 \\
walker/run & 0.382 & 0.437 & $+$0.976 & 0.798 & 0.609 & $+$0.948 \\
humanoid/walk & 0.644 & 0.385 & $+$0.001 & 0.856 & 0.863 & $+$0.008 \\
\bottomrule
\end{tabular}

\end{table}

\FloatBarrier
\section{Three million steps for the unfinished TD3 arcs}
\label{app:p4}

Under TD3 the arc completes within one million steps in seven of twenty environments (\S\ref{sec:removal}).
Whether the other thirteen are short of budget cannot be read from a single budget, so this appendix runs six of them three times as long: manipulator/bring\_ball, acrobot/swingup, finger/spin, humanoid/walk, InvertedDoublePendulum and InvertedPendulum, the three highest residuals among the thirteen at one million steps and three from the middle of that ranking, 5 seeds, three million steps, 30 runs, every setting as in \S\ref{sec:experiments}.
Because the exploration and warm-up schedules are absolute step counts, a three-million-step run is a one-million-step run continued, and each run carries its own reading at one million steps as a paired control.

\paragraph{Outcome.}
Every environment engaged in five of five seeds, and the residual at three million steps is below the residual of the same runs at one million steps in six of six environments: 0.827 to 0.732, 0.488 to 0.215, 0.321 to 0.190, 0.206 to 0.176, 0.490 to 0.056 and 0.294 to 0.225.
On InvertedDoublePendulum the ratio falls below the completion threshold while a large offset remains in action space, and removing the head there costs $+0.49$ of normalized return.
Table~\ref{tab:p4} gives every environment.

\begin{table}[ht]
  \caption{%
    \textbf{Three times the budget lowers the residual $r$ in each of six unfinished TD3 arcs.}
    Residual $r$ and removal cost of the same runs read at one million and at three million steps, and the run's peak amplitude $\|\delta\|$ (pre-activation, \S\ref{sec:measure}), each the mean over 5 seeds; cost is the normalized return of the shaped actor minus the base actor alone.
  }
  \label{tab:p4}
  \centering
  \small\setlength{\tabcolsep}{5pt}
\begin{tabular}{lrrrrr}
\toprule
 & \multicolumn{2}{c}{Residual $r$} & \multicolumn{2}{c}{Removal cost} & Peak \\
\cmidrule(lr){2-3}\cmidrule(lr){4-5}
Environment & 1M & 3M & 1M & 3M & $\|\delta\|$ \\
\midrule
manipulator/bring\_ball & 0.827 & 0.732 & $+$0.935 & $+$0.056 & 20395\phantom{.0} \\
acrobot/swingup & 0.488 & 0.215 & $+$0.033 & $-$0.038 & 22.1 \\
finger/spin & 0.321 & 0.190 & $+$0.155 & $+$0.009 & 160\phantom{.0} \\
humanoid/walk & 0.206 & 0.176 & $+$0.045 & $+$0.004 & 285\phantom{.0} \\
InvertedDoublePendulum & 0.490 & 0.056 & $+$0.950 & $+$0.489 & 21.1 \\
InvertedPendulum & 0.294 & 0.225 & $+$0.376 & $+$0.487 & 5.7 \\
\bottomrule
\end{tabular}

\end{table}

\paragraph{The offset in action space.}
The residual is a ratio to the run's own peak amplitude, taken before the squashing; the cost of removal depends on the offset that remains in action space.
On the two pendulum tasks the peak is 21 and 5.7 pre-activation units on a one-dimensional action, so a residual of 0.056 on InvertedDoublePendulum still leaves a mean offset of 0.85 before the squashing and of 0.18 in an action bounded by one after it, and the base actor without it keeps half the normalized return; on InvertedPendulum a residual of 0.225 still leaves a mean offset of 1.2 before the squashing, and removal costs $+0.49$ there too.
On humanoid/walk the mismatch runs the other way: a pre-activation residual of 0.18 sits on a saturated squashing, the offset applied in action space is 7\% of its peak, and removal costs $+0.004$.
Read in action space, as the mean over states of $\|f(\mu(s) + \delta(s)) - f(\mu(s))\|$, the residual tracks the six removal costs with a rank correlation of 0.71.
The proposition of \S\ref{sec:flat-line} concerns the pre-activation line and the residual is measured there, which is where to read whether the gate has closed; the cost of removal is paid in action space, and where the output map saturates at the peak, predicting that cost requires reading the offset there.

\section{Batch size and the closing time of the gate}
\label{app:p8}

Proposition~\ref{prop:drift} describes the return of the gate as a drift under gradient noise at a rate proportional to the noise level, and Appendix~\ref{app:toy} confirms the proportionality in the idealization.
On the agents the noise level is set by the training batch size $B$, since the variance of a minibatch gradient scales as $1/B$, so the proposition predicts that the gate closes sooner at smaller $B$.
This appendix varies $B$ and nothing else: the shared arm under TD3 and SAC on HalfCheetah, Humanoid and walker/run, the three environments of the mechanism suite whose arc completes within one million steps in at least four of five seeds under both critics at the default $B = 256$, run again at $B \in \{64, 256, 1024\}$ with 5 seeds for one million steps, 90 runs, every other hyperparameter as in \S\ref{sec:experiments}; the batch of evaluation states behind the amplitude is fixed and does not vary with $B$.

\paragraph{Measures.}
Per run, the closing time $T$ is the number of steps from the peak of the amplitude to the first evaluation at which the trailing mean over 20 evaluations is at or below 10\% of the peak, censored at one million steps.
A cell is one (environment, critic, $B$) and takes the median over seeds; a pair is one (environment, critic) across the three batch sizes.

\paragraph{Outcome.}
Every cell engaged in five of five seeds.
The arc completes earlier at $B = 64$ than at $B = 1024$ in five of six pairs; the exception is walker/run under TD3, 535K against 500K steps.
Table~\ref{tab:p8} gives every cell and Figure~\ref{fig:p8} the median curves.

\begin{table}[ht]
  \caption{%
    \textbf{The gate across batch sizes.}
    Closing time $T$ from the peak of the amplitude (K steps), median over 5 seeds; definition in the text.
  }
  \label{tab:p8}
  \centering
  \small\setlength{\tabcolsep}{5pt}
\begin{tabular}{llrr}
\toprule
Environment & Critic & $B$ & $T$ \\
\midrule
HalfCheetah & TD3 & 64 & 520 \\
 &  & 256 & 490 \\
 &  & 1024 & 705 \\
\addlinespace
HalfCheetah & SAC & 64 & 330 \\
 &  & 256 & 355 \\
 &  & 1024 & 440 \\
\addlinespace
Humanoid & TD3 & 64 & 195 \\
 &  & 256 & 120 \\
 &  & 1024 & 205 \\
\addlinespace
Humanoid & SAC & 64 & 110 \\
 &  & 256 & 110 \\
 &  & 1024 & 135 \\
\addlinespace
walker/run & TD3 & 64 & 535 \\
 &  & 256 & 730 \\
 &  & 1024 & 500 \\
\addlinespace
walker/run & SAC & 64 & 135 \\
 &  & 256 & 270 \\
 &  & 1024 & 345 \\
\bottomrule
\end{tabular}

\end{table}

\begin{figure}[ht]
  \centering
  \includegraphics[width=\textwidth]{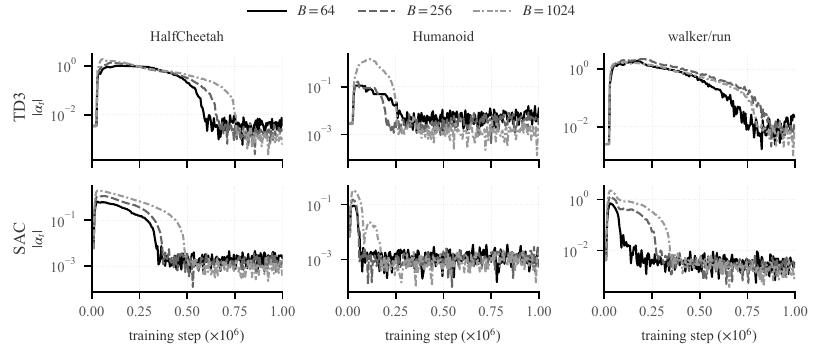}
  \caption{\textbf{The gate rises and falls at every batch size.} $|\alpha_t|$ during training, median over 5 seeds, log scale; rows labelled TD3 and SAC; black $B = 64$, grey dashed $B = 256$, light grey dash-dot $B = 1024$. The closing time $T$ of Table~\ref{tab:p8} is read from the amplitude rather than from these curves.}
  \label{fig:p8}
\end{figure}

\section{The same head composed multiplicatively}
\label{app:p9}

Among the seven arms of \S\ref{sec:arms}, an offset linear in the representation the base reads is inexpressible only where the base head is changed (Appendix~\ref{app:p1bc}); here the base head is that of the shared arm and the linear head enters the action multiplicatively, so that the offset is inexpressible while the head stays linear.
The gain arm acts with $f\bigl(\mu(s) \odot (1 + \alpha\, g(s))\bigr)$, $g(s) = W_g z(s)$ with $W_g$ zero-initialized and no bias, so that the shaped actor coincides with the base actor at initialization exactly as in the shared arm and the head has the same parameter count less one bias vector.
The composite carries the term $\alpha\,(W_\mu z) \odot (W_g z)$, quadratic in $z$ wherever $\alpha W_g \neq 0$ and the base head reads $z$ in a coordinate the gain reads, so no base-head parameters reproduce it: the offset is inexpressible although the head is linear, and the condition of \S\ref{sec:arms} predicts that the gate stays open.
The offset is $\delta = \alpha\, \mu \odot g$, taken before $f$, and the amplitude, residual and removal cost are those of \S\ref{sec:measure}; one more quantity is logged, the state-dependence index $\mathrm{sdi} = \mathbb{E}_s\|g(s) - \bar g\| / \mathbb{E}_s\|g(s)\|$ over the evaluation batch, which is near zero only if the head has degenerated into a constant gain, the one case in which the composite becomes expressible again.
The arm runs on the mechanism suite, six environments under TD3 and SAC with seeds 0 to 2, 36 runs of one million steps, against a shared-arm control of the same 12 cells (36 runs) trained alongside it on the same hardware with the same seeds.

\paragraph{Outcome.}
A cell is one (environment, algorithm); the gain arm is summarized by its seed median and the control by its seed maximum, so that a cell counts only if the gain arm exceeds every control seed.
Every cell engages, with the peak amplitude above $10^{-3}$ in at least two seeds (12 of 12 cells); the gain residual exceeds the control's (11 of 12) and the completion threshold $0.10$ (11 of 12); and the gain removal cost exceeds the control's by more than 0.1 in four cells (HalfCheetah and walker/run under TD3 and under SAC).
Table~\ref{tab:p9} gives every cell.

\begin{table}[ht]
  \caption{%
    \textbf{A linear head composed multiplicatively stays.}
    Terminal residual $r$, removal cost and state-dependence index of the gain arm against the paired shared arm on the mechanism suite; gain cells are the median over three seeds (the index its mean), control cells the seed maximum; cost in normalized return at 1M steps; a value that rounds to zero is printed without a sign.
  }
  \label{tab:p9}
  \centering
  \small\setlength{\tabcolsep}{4pt}
\begin{tabular}{lrrrrrrrrrr}
\toprule
 & \multicolumn{4}{c}{Terminal residual $r$} & \multicolumn{4}{c}{Removal cost} & \multicolumn{2}{c}{$\mathrm{sdi}$} \\
\cmidrule(lr){2-5}\cmidrule(lr){6-9}\cmidrule(lr){10-11}
 & \multicolumn{2}{c}{TD3} & \multicolumn{2}{c}{SAC} & \multicolumn{2}{c}{TD3} & \multicolumn{2}{c}{SAC} & TD3 & SAC \\
\cmidrule(lr){2-3}\cmidrule(lr){4-5}\cmidrule(lr){6-7}\cmidrule(lr){8-9}
Environment & gain & shared & gain & shared & gain & shared & gain & shared & gain & gain \\
\midrule
Hopper & 0.368 & 0.226 & 0.837 & 0.026 & $+$0.018 & $+$0.011 & $-$0.030 & $+$0.001 & 1.09 & 0.58 \\
HalfCheetah & 0.532 & 0.007 & 0.949 & 0.002 & $+$0.651 & $+$0.002 & $+$0.298 & $+$0.002 & 0.94 & 0.36 \\
Humanoid & 0.819 & 0.020 & 0.926 & 0.013 & $+$0.005 & $+$0.007 & $-$0.014 & $+$0.011 & 0.67 & 0.09 \\
finger/spin & 0.211 & 0.420 & 0.779 & 0.324 & $+$0.099 & $+$0.232 & $+$0.010 & $+$0.014 & 0.74 & 0.59 \\
walker/run & 0.099 & 0.010 & 0.947 & 0.005 & $+$0.584 & $+$0.003 & $+$0.148 & $+$0.001 & 1.02 & 0.30 \\
humanoid/walk & 0.704 & 0.216 & 0.881 & 0.237 & $+$0.008 & 0.000 & $-$0.001 & $+$0.043 & 0.63 & 0.34 \\
\bottomrule
\end{tabular}

\end{table}

\paragraph{Reading.}
In eleven of the twelve cells the gate does not close: the gain residual is above $0.70$ in eight cells and above the control's seed maximum in eleven, and the index stays between $0.30$ and $1.09$ in every cell but Humanoid under SAC ($0.09$), so the head kept a state-dependent gain rather than collapsing to a constant one.
The one cell at the threshold, walker/run under TD3, keeps ten times the control's residual (0.099 against 0.010).
Removing the gain head costs $+0.65$ and $+0.58$ of normalized return on HalfCheetah and walker/run under TD3 and $+0.30$ and $+0.15$ under SAC, the four cells in which the gain head carries a large part of the shaped action; in the other cells the gate stays open at a small amplitude and removal moves the return by at most $0.03$, since the offset there moves the action little (Appendix~\ref{app:p4}), except on finger/spin under TD3, $+0.10$.
The one cell in which the gain residual is below the control's, finger/spin under TD3, is one where the shared arc itself does not complete within the budget (seed maximum $r = 0.42$).

\section{Evaluation protocol and residual conventions}
\label{app:eval}

\paragraph{Evaluation.}
The twenty-environment configurations are trained for 1M environment steps with 5 seeds on each of the 20 continuous-control environments, ten from MuJoCo v5 and ten from the DeepMind Control Suite.
Evaluation is paired and runs every 5{,}000 steps: the same checkpoint is rolled out twice on the same environment seeds, once with the shaping head active and once with it removed, so the removal cost of \S\ref{sec:measure} is a difference between two returns measured at the same point of the same run rather than between two runs.
A terminal score is the mean of the last $W = 20$ evaluations, that is, the last 100K steps of training.
The same window is the numerator of the terminal residual in~\eqref{eq:residual}. The four checkpoints of Table~\ref{tab:checkpoint} are fixed per run from the trajectory of its offset in action space alone, the mean over the evaluation states of $\|f(\mu(s) + \delta(s)) - f(\mu(s))\|$: \emph{rising} is the first evaluation at which it reaches half of its peak, \emph{peak} is the maximum, \emph{post-peak} is the first later evaluation at which it is back at or below half of the peak (midway between the peak and the end of training when there is none), and \emph{final} is the end of training; the saved checkpoint nearest each is rolled out for 10 deterministic episodes with and without the head, and the interval is a 95\% percentile bootstrap over the 30 runs of each algorithm on the mechanism suite.

\paragraph{Normalization.}
Per-environment scores are min-max normalized as $(\text{score} - \text{random}) / (\text{best} - \text{random})$.
The random anchor is the mean return of a uniform random policy (5 seeds, 100 episodes); the best anchor is, per environment, the maximum over evaluated methods of the seed-mean terminal return (last 20 evaluations), so a single run can score above 1; the anchors are fixed per environment and shared by every figure and table.
Min-max normalization with two reference points per task follows the rliable protocol~\citep{agarwal2021rliable}, and since every method shares the same anchors, normalization preserves the sign and order of every within-environment difference.

\paragraph{Aggregation.}
Aggregate scores are the interquartile mean (IQM) with 95\% stratified bootstrap confidence intervals~\citep{agarwal2021rliable} over 50{,}000 replicates.
Where a removal cost is reported as an IQM, it is the IQM of per-run paired differences, not a difference of two IQMs.

\paragraph{Residual conventions.}
We form the terminal residual per run by Equation~\ref{eq:residual}, as the mean of the last 20 evaluations of $\|\delta_t\|$ over that run's own peak, and take an environment's residual to be the mean of $r$ over its seeds.
Three other conventions are defensible.
Table~\ref{tab:residual-conventions} recomputes under each of them the two quantities that depend on the residual: the number of environments counted as completed at the fixed threshold $r \leq 0.10$, and the rank correlation between an environment's residual and the size of its removal cost.
The completed count is 7, 8, 8 and 10 under TD3 and 13, 13, 14 and 13 under SAC across the four conventions, and the correlation stays between 0.565 and 0.671 with $p < 0.01$ in every case, so the correlation reported in \S\ref{sec:removal} does not depend on the convention.

\begin{table}[ht]
  \caption{%
    \textbf{Residual conventions.}
    Each row is one way of forming an environment's terminal residual from the per-run amplitude trajectories.
    \emph{Completed}: environments with $r \leq 0.10$, out of 20.
    \emph{Spearman}: rank correlation between an environment's residual and the magnitude of its removal cost, with its $p$-value.
    The first row is the convention used in the paper, defined in~\eqref{eq:residual}; it counts no more environments as completed than any other row, and its correlation is the one reported in \S\ref{sec:removal}.
    The equal TD3 and SAC values in the second row are an exact coincidence: the two rankings differ, but their sums of squared rank differences are both 522.
  }
  \label{tab:residual-conventions}
  \centering
  \small\setlength{\tabcolsep}{3pt}
  \begin{tabular}{@{}>{\raggedright\arraybackslash}p{0.46\textwidth}cccc@{}}
    \toprule
    & \multicolumn{2}{c}{TD3} & \multicolumn{2}{c}{SAC} \\
    \cmidrule(lr){2-3}\cmidrule(lr){4-5}
    Convention & completed & Spearman ($p$) & completed & Spearman ($p$) \\
    \midrule
    Per-run $r$, mean over seeds
      & 7 & 0.614 (0.0040) & 13 & 0.636 (0.0026) \\
    Seed-mean trajectory, last 20 evaluations over peak
      & 8 & 0.608 (0.0045) & 13 & 0.608 (0.0045) \\
    Seed-mean trajectory, last point over peak
      & 8 & 0.565 (0.0094) & 14 & 0.571 (0.0085) \\
    Per-run $r$, median over seeds
      & 10 & 0.626 (0.0032) & 13 & 0.671 (0.0012) \\
    \bottomrule
  \end{tabular}
\end{table}

\section{Per-environment results}
\label{app:per-env}

Table~\ref{tab:per-env} lists the terminal IQM of the shaped actors and the frozen arms in every environment, and Figure~\ref{fig:removal} the per-environment removal cost with its interval, pooled over the shaped TD3 and SAC actors (17 of 20 intervals contain zero for SAC alone, 15 for TD3 alone).

\begin{table}[htbp]
  \caption{\textbf{Per-environment terminal IQM of the shaped actors} (20 environments, 5 seeds, shaped evaluation, min-max normalized); $\pm$ is the half-width of the 95\% stratified bootstrap CI. AS-TD3-Fr and AS-SAC-Fr are the frozen arms.}
  \label{tab:per-env}
  \centering
  \fontsize{8}{9.5}\selectfont\setlength{\tabcolsep}{4.5pt}
\begin{tabular}{@{}lrrrrr@{}}
\toprule
Environment & \multicolumn{1}{c}{AS-TD3} & \multicolumn{1}{c}{AS-TD3-Fr} & \multicolumn{1}{c}{AS-SAC} & \multicolumn{1}{c}{AS-SAC-Fr} & \multicolumn{1}{c}{AS-PPO} \\
\midrule
InvertedPendulum & 1.009 $\pm$ 0.006 & 0.912 $\pm$ 0.109 & 1.004 $\pm$ 0.024 & 1.000 $\pm$ 0.015 & 1.018 $\pm$ 0.030 \\
InvertedDoublePendulum & 1.001 $\pm$ 0.012 & 0.980 $\pm$ 0.032 & 1.005 $\pm$ 0.022 & 0.036 $\pm$ 0.003 & 0.763 $\pm$ 0.127 \\
Reacher & 1.004 $\pm$ 0.005 & 0.950 $\pm$ 0.007 & 1.020 $\pm$ 0.004 & 0.588 $\pm$ 0.183 & 0.969 $\pm$ 0.008 \\
Swimmer & 0.805 $\pm$ 0.122 & 0.594 $\pm$ 0.031 & 0.909 $\pm$ 0.144 & 0.649 $\pm$ 0.055 & 0.873 $\pm$ 0.397 \\
Hopper & 0.971 $\pm$ 0.097 & 0.050 $\pm$ 0.008 & 1.017 $\pm$ 0.139 & 0.961 $\pm$ 0.121 & 1.057 $\pm$ 0.214 \\
HalfCheetah & 0.995 $\pm$ 0.128 & 0.461 $\pm$ 0.053 & 1.023 $\pm$ 0.020 & 0.425 $\pm$ 0.073 & 0.568 $\pm$ 0.059 \\
Walker2d & 0.889 $\pm$ 0.100 & 0.083 $\pm$ 0.182 & 0.899 $\pm$ 0.045 & 0.310 $\pm$ 0.158 & 0.963 $\pm$ 0.347 \\
Ant & 1.034 $\pm$ 0.203 & 0.702 $\pm$ 0.181 & 0.780 $\pm$ 0.115 & 0.013 $\pm$ 0.001 & 0.353 $\pm$ 0.054 \\
Humanoid & 0.941 $\pm$ 0.014 & 0.089 $\pm$ 0.037 & 0.964 $\pm$ 0.127 & $-$0.010 $\pm$ 0.000 & 0.192 $\pm$ 0.322 \\
Pusher & 0.984 $\pm$ 0.015 & 0.977 $\pm$ 0.012 & 1.019 $\pm$ 0.005 & 0.900 $\pm$ 0.013 & 0.901 $\pm$ 0.017 \\
acrobot/swingup & 0.127 $\pm$ 0.095 & 0.039 $\pm$ 0.087 & 0.284 $\pm$ 0.118 & 0.043 $\pm$ 0.071 & 1.101 $\pm$ 0.371 \\
cartpole/swingup & 0.992 $\pm$ 0.036 & 0.870 $\pm$ 0.305 & 1.006 $\pm$ 0.013 & 0.992 $\pm$ 0.030 & 0.964 $\pm$ 0.025 \\
cheetah/run & 1.058 $\pm$ 0.145 & 0.319 $\pm$ 0.265 & 0.942 $\pm$ 0.079 & 0.578 $\pm$ 0.065 & 0.618 $\pm$ 0.225 \\
hopper/hop & 0.857 $\pm$ 0.831 & 0.011 $\pm$ 0.007 & 0.837 $\pm$ 0.329 & 0.017 $\pm$ 0.180 & 0.019 $\pm$ 0.002 \\
walker/run & 0.895 $\pm$ 0.157 & 0.029 $\pm$ 0.056 & 1.037 $\pm$ 0.087 & 0.182 $\pm$ 0.034 & 0.103 $\pm$ 0.050 \\
humanoid/walk & 0.018 $\pm$ 0.440 & 0.007 $\pm$ 0.001 & 0.027 $\pm$ 0.374 & 0.007 $\pm$ 0.001 & 0.022 $\pm$ 0.001 \\
swimmer/swimmer15 & 1.074 $\pm$ 0.275 & 0.151 $\pm$ 1.008 & 0.835 $\pm$ 0.439 & 0.502 $\pm$ 0.254 & 0.429 $\pm$ 0.181 \\
finger/spin & 1.104 $\pm$ 0.167 & 0.011 $\pm$ 0.230 & 0.924 $\pm$ 0.220 & 0.547 $\pm$ 0.155 & 0.018 $\pm$ 0.001 \\
reacher/hard & 0.998 $\pm$ 0.016 & 0.395 $\pm$ 0.161 & 1.004 $\pm$ 0.019 & 0.507 $\pm$ 0.261 & 0.504 $\pm$ 0.063 \\
manipulator/bring\_ball & $-$0.086 $\pm$ 0.268 & $-$0.053 $\pm$ 0.262 & 0.110 $\pm$ 1.883 & $-$0.132 $\pm$ 0.292 & 0.143 $\pm$ 0.673 \\
\bottomrule
\end{tabular}
\end{table}

\begin{figure}[htbp]
  \centering
  \includegraphics[width=\textwidth]{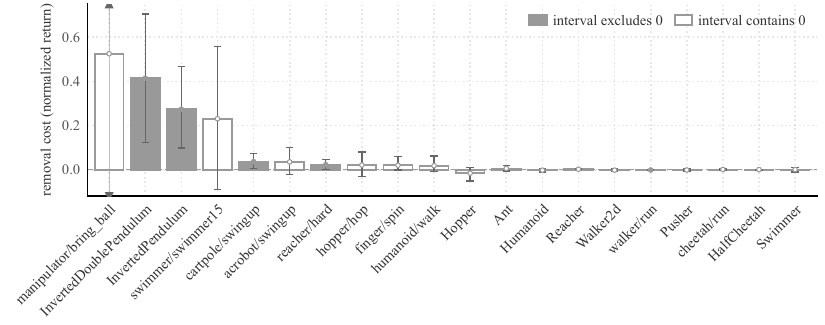}
  \caption{%
    \textbf{The removal-cost interval contains zero in 15 of the 20 environments.} Per-environment removal cost (shaped minus base-only): the mean over the pooled runs with its 95\% bootstrap CI.
    The shaped TD3 and SAC actors pooled, 5 seeds each, min-max normalized, sorted by magnitude. The interval of manipulator/bring\_ball, $[-0.20, +1.58]$, is clipped to the axis.
  }
  \label{fig:removal}
\end{figure}
\FloatBarrier

\section{Amplitude decomposition}
\label{app:decomposition}

The amplitude factorizes as $\|\delta\| = |\alpha| \cdot \|h\|$, and Figure~\ref{fig:decomposition} plots the two factors separately for the six environments of the mechanism suite under the default configuration.
Across the twelve (environment, algorithm) cells the median decline of $|\alpha|$ from its own peak is 98.14\%.
The head's output norm does not return to zero: the terminal $\|h\|$ lies between 20\% and 94\% of its own peak, between 20\% and 88\% under TD3 and between 51\% and 94\% under SAC, and above half of its peak in ten of the twelve cells, the two below half being Hopper (0.20) and walker/run (0.43) under TD3.
The two factors are therefore not symmetric, the gate closing almost completely in ten of the twelve cells while the head keeps most of its scale, which is the asymmetry described in \S\ref{sec:flat-line}.

\begin{figure}[ht]
  \centering
  \includegraphics[width=\textwidth]{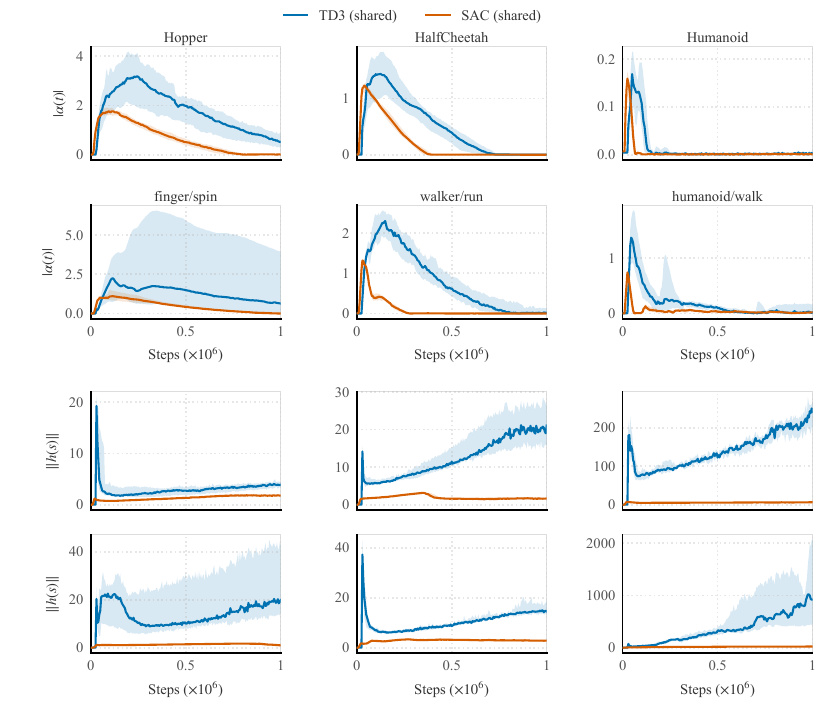}
  \caption{%
    \textbf{Amplitude decomposition: the fall of the amplitude comes from the gate} (6 environments, 5 seeds, default configuration).
    Upper block: $|\alpha|$ over training.
    Lower block: $\|h(s)\|$ over training.
    TD3 in blue, SAC in orange; lines are the interquartile mean over seeds (median of 2{,}000 bootstrap resamples) and shading is the 95\% stratified bootstrap CI.
  }
  \label{fig:decomposition}
\end{figure}

\section{Update-to-data ratio}
\label{app:utd}

Section~\ref{sec:arc} compares the twin critic with an ensemble critic~\citep{chen2021randomized} on the four environments of this experiment, which varies the update-to-data ratio inside the ensemble critic, with $N = 10$ networks and $M = 2$ sampled per update.
Table~\ref{tab:utd} gives the per-environment and aggregate IQM for all four configurations at each ratio; within the batch shared by SAC-REDQ and its shaped variant, the shaped actor's aggregate IQM is above its host's at every ratio, by 0.029, 0.038 and 0.064.

\begin{table}[ht]
  \caption{%
    \textbf{Update-to-data scaling, per-environment IQM} (4 environments, 5 seeds, shaped evaluation).
    Min-max normalized; $\pm$ half-width of the 95\% stratified bootstrap CI.
    Last row of each group: aggregate IQM.
    All four configurations use the same ensemble critic ($N = 10$, $M = 2$).
    The TD3-REDQ column is a separate run batch; the other three columns share one batch.
  }
  \label{tab:utd}
  \centering
  \small
\begin{tabular}{lcccc}
\toprule
Environment & TD3-REDQ & AS-TD3-REDQ & SAC-REDQ & AS-SAC-REDQ \\
\midrule
\multicolumn{5}{l}{\textit{UTD = 1}} \\
\midrule
Hopper & 0.983 $\pm$ 0.087 & 1.007 $\pm$ 0.083 & 0.900 $\pm$ 0.234 & 0.827 $\pm$ 0.146 \\
HalfCheetah & 0.962 $\pm$ 0.099 & 0.994 $\pm$ 0.039 & 1.058 $\pm$ 0.044 & 1.090 $\pm$ 0.035 \\
Humanoid & 0.963 $\pm$ 0.022 & 0.962 $\pm$ 0.021 & 1.017 $\pm$ 0.033 & 1.048 $\pm$ 0.064 \\
walker/run & 0.911 $\pm$ 0.178 & 0.814 $\pm$ 0.152 & 0.690 $\pm$ 0.215 & 0.928 $\pm$ 0.165 \\
\midrule
\textbf{Aggregate} & 0.963 $\pm$ 0.037 & 0.962 $\pm$ 0.020 & 0.970 $\pm$ 0.058 & 0.999 $\pm$ 0.052 \\
\midrule
\multicolumn{5}{l}{\textit{UTD = 5}} \\
\midrule
Hopper & 0.975 $\pm$ 0.048 & 1.003 $\pm$ 0.163 & 0.842 $\pm$ 0.203 & 0.814 $\pm$ 0.205 \\
HalfCheetah & 1.067 $\pm$ 0.056 & 1.072 $\pm$ 0.097 & 1.188 $\pm$ 0.124 & 1.219 $\pm$ 0.073 \\
Humanoid & 0.946 $\pm$ 0.026 & 0.952 $\pm$ 0.068 & 1.027 $\pm$ 0.037 & 1.054 $\pm$ 0.076 \\
walker/run & 0.913 $\pm$ 0.127 & 0.911 $\pm$ 0.142 & 0.865 $\pm$ 0.034 & 0.872 $\pm$ 0.072 \\
\midrule
\textbf{Aggregate} & 0.976 $\pm$ 0.025 & 0.978 $\pm$ 0.052 & 0.966 $\pm$ 0.036 & 1.004 $\pm$ 0.051 \\
\midrule
\multicolumn{5}{l}{\textit{UTD = 20}} \\
\midrule
Hopper & 0.856 $\pm$ 0.254 & 1.049 $\pm$ 0.128 & 0.639 $\pm$ 0.153 & 0.733 $\pm$ 0.242 \\
HalfCheetah & 1.095 $\pm$ 0.079 & 1.040 $\pm$ 0.219 & 1.114 $\pm$ 0.172 & 1.202 $\pm$ 0.188 \\
Humanoid & 0.945 $\pm$ 0.327 & 0.980 $\pm$ 0.022 & 1.010 $\pm$ 0.020 & 1.054 $\pm$ 0.046 \\
walker/run & 0.957 $\pm$ 0.074 & 0.927 $\pm$ 0.171 & 0.900 $\pm$ 0.126 & 0.952 $\pm$ 0.040 \\
\midrule
\textbf{Aggregate} & 0.974 $\pm$ 0.052 & 0.995 $\pm$ 0.062 & 0.933 $\pm$ 0.067 & 0.997 $\pm$ 0.033 \\
\bottomrule
\end{tabular}
\end{table}

\paragraph{The gate under the two critics.}
Table~\ref{tab:utd-alpha} compares, on these four environments, the terminal-to-peak ratio of $|\alpha|$ under the twin critic of the main runs and under the ensemble critic at an update-to-data ratio of 1, with 5 seeds each.
Under TD3 the ensemble critic brings the one cell in which the twin critic leaves the ratio above 0.10 at 1M steps, Hopper, from 0.2112 to 0.0022, and the mean ratio over the four environments from 0.0637 to 0.0113; under SAC every ratio is at most 0.0102 with either critic.

\begin{table}[ht]
  \caption{%
    \textbf{Terminal-to-peak ratio of $|\alpha|$ under the twin and the ensemble critic} (4 environments, 5 seeds, shaped actors).
    Per environment, the mean over seeds of $|\alpha_t|$ is averaged over the last 20 evaluations and divided by its maximum over training; \emph{twin} is the critic of the main runs and \emph{ensemble} the critic of Table~\ref{tab:utd} ($N = 10$, $M = 2$) at an update-to-data ratio of 1. The ensemble critic brings the one TD3 ratio that the twin critic leaves above 0.10, Hopper, to 0.0022.
  }
  \label{tab:utd-alpha}
  \centering
  \small\setlength{\tabcolsep}{5pt}
  \begin{tabular}{lrrrr}
    \toprule
    & \multicolumn{2}{c}{TD3} & \multicolumn{2}{c}{SAC} \\
    \cmidrule(lr){2-3}\cmidrule(lr){4-5}
    Environment & twin & ensemble & twin & ensemble \\
    \midrule
    Hopper & 0.2112 & 0.0022 & 0.0087 & 0.0044 \\
    HalfCheetah & 0.0016 & 0.0087 & 0.0015 & 0.0019 \\
    Humanoid & 0.0185 & 0.0098 & 0.0102 & 0.0076 \\
    walker/run & 0.0235 & 0.0246 & 0.0019 & 0.0038 \\
    \midrule
    Mean & 0.0637 & 0.0113 & 0.0056 & 0.0044 \\
    \bottomrule
  \end{tabular}
\end{table}

\section{Sensitivity to $\varepsilon$}
\label{app:eps}

The initial gate $\alpha \sim U(-\varepsilon,\, \varepsilon)$ is the only constant the method adds; since the head starts at zero, it sets not the offset at initialization, which is zero for every $\varepsilon$, but the learning signal the head receives and the scale at which the head's output first enters the action.
At $\varepsilon = 0$ the offset is identically zero and stays there: the gradient of the actor loss with respect to $(W_h, b_h)$ carries a factor $\alpha$ (\S\ref{sec:flat-line}), so at $\alpha = 0$ the shaping head receives no learning signal.
\citet{khemais2026surgery} shows that a zero gate over a zero-initialized output projection is an exact saddle point that gradient descent cannot escape, which is why the gate is initialized away from zero.
Figure~\ref{fig:eps} compares $\varepsilon \in \{0.01,\, 0.1\}$ on four environments under both algorithms.
A tenfold larger $\varepsilon$ raises the terminal return in four of the eight panels and lowers it in four, by at most 11.4\% (last 20 evaluations, mean over seeds), so within this range the one constant the method adds needs no tuning.

\begin{figure}[ht]
  \centering
  \includegraphics[width=\textwidth]{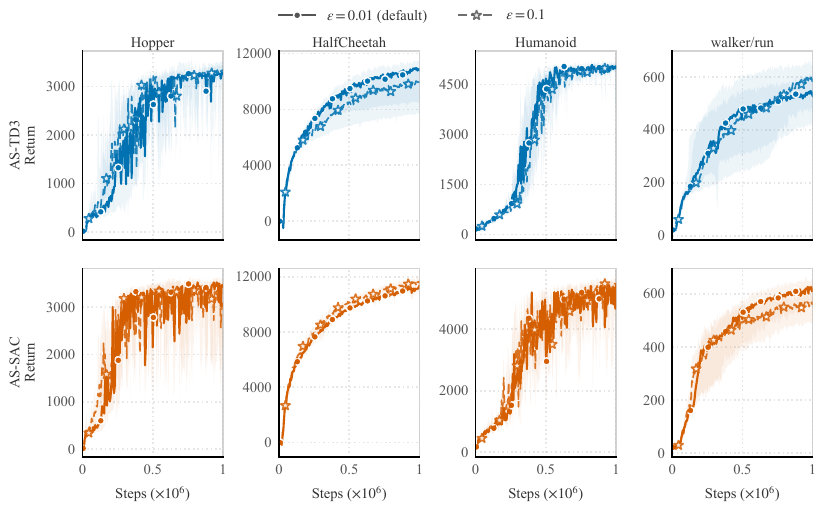}
  \caption{%
    \textbf{The one constant the method adds needs no tuning across a tenfold range of $\varepsilon$} (4 environments, 5 seeds, shaped evaluation, raw return).
    Rows: shaped TD3 (top) and shaped SAC (bottom).
    $\varepsilon = 0.01$ (solid, circles): the value used everywhere else in the paper.
    $\varepsilon = 0.1$ (dashed, stars): a larger initial gate.
    Lines are the interquartile mean over seeds (median of 2{,}000 bootstrap resamples) and shading is the 95\% stratified bootstrap CI.
  }
  \label{fig:eps}
\end{figure}

\section{Atari}
\label{app:atari}

The three interquartile means of \S\ref{sec:discrete}, 0.183 [0.157, 0.216] for the base actor alone, 0.170 for the shaped actor and 0.154 for the unshaped host, and the difference of 0.013 [0.003, 0.023] by which the base actor alone exceeds the shaped actor (a difference of interquartile means, resampled with matched seeds), all come from one evaluation pass: at the checkpoint saved at 1M steps, each policy acts deterministically (argmax) for 10 episodes on matched seeds, and the interquartile mean is taken over the 130 seed-game pairs, with a game-stratified, seed-matched bootstrap for the intervals.

Table~\ref{tab:atari-arc} gives the amplitude ladder of the seven games in which the gate engages, reconstructed from a median of 14 saved checkpoints per run: three complete the arc within 1M steps (Seaquest, Alien, Boxing; $r$ from 0.016 to 0.057) and four are still active at 1M (BattleZone, ChopperCommand, Pong, Kangaroo; $r$ from 0.34 to 1.0).
In the other 19 games, Amidar, Assault, Asterix, BankHeist, Breakout, CrazyClimber, DemonAttack, Freeway, Frostbite, Gopher, Hero, Jamesbond, Krull, KungFuMaster, MsPacman, PrivateEye, Qbert, RoadRunner, UpNDown%
, the peak of $\|\delta_t\|$ never reaches $10^{-3}$ (median peak $3.3\times10^{-5}$), so no arc exists there to complete.

\begin{table}[ht]
  \caption{%
    \textbf{Atari amplitude ladder for the seven games in which the gate engages.}
    Seed-averaged $\|\delta_t\|$, the expected probability-space L2 distance between the shaped and the base policy on a fixed buffer of 512 frames per run, at the saved checkpoint nearest each share of the 1M-step budget (5 seeds per game); cells below $10^{-4}$ print as $<$0.0001.
    $r$ is the terminal amplitude divided by the peak of the full ladder of saved checkpoints (median 14 per run), computed before rounding; the arc is completed when $r \le 0.10$.
  }
  \label{tab:atari-arc}
  \centering
  \small\setlength{\tabcolsep}{4pt}
\begin{tabular}{lrrrrrrll}
\toprule
 & \multicolumn{6}{c}{$\|\delta_t\|$ at the checkpoint nearest a share of the 1M-step budget} & & \\
\cmidrule(lr){2-7}
Game & 5\% & 10\% & 25\% & 50\% & 75\% & 100\% & $r$ & Arc \\
\midrule
Seaquest & 0.0003 & 0.0035 & 0.0007 & 0.0002 & $<$0.0001 & 0.0001 & 0.016 & completed \\
Alien & 0.0002 & 0.0019 & $<$0.0001 & $<$0.0001 & $<$0.0001 & 0.0001 & 0.030 & completed \\
Boxing & 0.0001 & 0.0011 & $<$0.0001 & $<$0.0001 & $<$0.0001 & 0.0001 & 0.057 & completed \\
\midrule
BattleZone & 0.0003 & 0.0006 & 0.0149 & 0.0254 & 0.0206 & 0.0086 & 0.34 & active \\
ChopperCommand & 0.0002 & 0.0018 & 0.0477 & 0.0349 & 0.0328 & 0.0291 & 0.49 & active \\
Pong & 0.0005 & 0.0008 & 0.0176 & 0.0334 & 0.0882 & 0.0838 & 0.95 & active \\
Kangaroo & 0.0002 & 0.0014 & 0.0042 & 0.0010 & 0.0020 & 0.0099 & 1.00 & active \\
\bottomrule
\end{tabular}

\end{table}

\FloatBarrier
\section{Extended related work}
\label{app:related}

This appendix gives the longer form of \S\ref{sec:related}.

\paragraph{Reward shaping and the action channel.}
Potential-based shaping characterizes the reward modifications that leave the optimal policy unchanged and can therefore be removed at deployment~\citep{ng1999reward}; the characterization is independent of the learner.
The action channel has no invariance of that kind, since an offset on the action changes the state distribution, so the same question has to be answered by the training dynamics rather than by the MDP.
Our answer is a condition for absorption and the mechanism behind it. Guidance can also enter through the replay buffer: injecting transitions from a model-predictive controller steers TD3 and SAC toward a designer-preferred gait while the reward stays unchanged~\citep{xing2026mpcinjection}; that guidance lives in the data, so nothing remains in the policy to remove.
The phrase action space shaping has been used for reducing a discrete action space before training~\citep{kanervisto2020actionspace}, and reshaping the action space during training has meant learning limits on the allowed actions~\citep{pham2018ceres}; both are unrelated to the offset studied here.

\paragraph{Folding a training-time branch into a layer.}
Structural re-parameterization trains a multi-branch block and folds it algebraically into a single layer for inference~\citep{ding2021repvgg,hu2022orepa,dlr2026}, and linear over-parameterization does the same for stacked linear layers~\citep{guo2020expandnets,arora2018implicit,du2018balanced}; the merge is performed by hand after training.
Our linear head admits the same merge (\S\ref{sec:arms}); we ask whether training itself arrives at the merged solution, and find that it does only under the conditions of \S\ref{sec:mechanism}.

\paragraph{Learnable residual scales.}
In ReZero~\citep{bachlechner2021rezero} and LayerScale~\citep{touvron2021cait} a residual branch is scaled by a learnable gate that starts at or near zero; the gate grows and stays because the branch is a permanent part of the model, whereas on an actor the same gate rises and then returns to zero, and the paper is about why.

\paragraph{Train with, deploy without.}
Removing a training-time component at deployment is established for the observation channel by privileged-sensor scaffolds~\citep{freed2024scaffolder}.
Residual distillation trains a plain network alongside its shortcut-equipped twin and removes the shortcuts by a schedule~\citep{li2020residualdistillation}; gated normalization is tapered out by a schedule~\citep{tapernorm2026}; action priors added to the sampled action decay by a schedule~\citep{sood2024decap,sood2025apex}; jump-start and attenuated residual methods phase out a base or guide policy by a schedule~\citep{uchendu2023jsrl,trumpp2026arpo,bolychev2026agency}, as does model-predictive control used as scaffolding for real-world dexterous manipulation~\citep{kucuktabak2026mpcscaffold}.
The same design has appeared for language agents, where skill, critique, or budget scaffolds are supplied during reinforcement learning and withdrawn by a curriculum before deployment~\citep{lu2026skill0,he2026siri,shi2026pats,sun2026anysearch}, and where an agent harness used as guidance during training is distilled into the weights by fine-tuning and removed at deployment~\citep{ye2026harnesszero}.
In every case a designer decides when the component leaves.
The condition of \S\ref{sec:theory} says where a schedule is needed and where it is not: an expressible branch on a trainable base drifts toward absorption on its own.

\paragraph{Residual and correction policies.}
Residual policy learning adds a learned correction to a fixed or slowly adapting base~\citep{silver2018residual,johannink2019residual,gou2023racemop,feng2024tsac,wang2025rpg,wei2025pld,wang2026r2po,resfit2025,beyondresiduals2026,warprl2026,muttaqien2026orpa,zhang2026dlsrl,wu2026gatedresidual,ma2026dawn}; the correction is meant to stay and the base is usually frozen, the regime our frozen arm reproduces, in which the offset stays.

\paragraph{Gates that close and branches that fall silent.}
Learnable gates on auxiliary pathways into a frozen backbone tend to close at zero through a dead-gradient regime~\citep{farazi2026gate}; deep attention sublayers learn to mute themselves during pre-training~\citep{saikumar2025datafree}; sparsity penalties prune branches on purpose~\citep{huang2018sss}.
Our $\alpha$ closes with no penalty and only after it has opened, and the three states of \S\ref{sec:measure} keep those cases apart.

\paragraph{Internalization and removability.}
\citet{tsilivis2026internalization} define internalization as a network absorbing an explicit procedure into weights of the same hypothesis class; their chain-of-thought scaffold is removed by a curriculum, ours is absorbed by training, and we add the trainable-base condition, the flat-line mechanism, and a diagnostic read from the amplitude alone.
Concurrent work on annealable soft priors finds that removability depends on the training trajectory~\citep{yang2026removability}, as our checkpoint interventions find within a run for a parametric branch.

\paragraph{Why a redundant parameter drifts to zero.}
Near a manifold of minimizers, gradient noise moves stochastic gradient descent toward flatter points~\citep{blanc2020implicit,damian2021labelnoise,li2022zeroloss}, toward sparse solutions~\citep{pesme2021diagonal}, and onto simpler invariant subnetworks~\citep{kunin2021neuralmechanics,chen2023stochasticcollapse}; Section~\ref{sec:theory} shows that an expressible offset creates such a manifold, flat in $\alpha$ with its flattest point at $\alpha = 0$, and computes the drift on it.

\FloatBarrier
\section{Hyperparameters}
\label{app:hyperparameters}

The twenty-environment runs of \S\ref{sec:experiments} use the settings listed here; the other experiment families deviate only as Table~\ref{tab:hyperparameters-varying} states (Table~\ref{tab:hyperparameters-atari} gives the Atari settings), apart from the head and base variants of \S\ref{sec:arms} and Appendices~\ref{app:p1bc} and~\ref{app:p9}, described where they are used.

\begin{table}[ht]
  \caption{%
    \textbf{Hyperparameters for continuous control.}
    One value per algorithm unless a row names the experiments it applies to.
  }
  \label{tab:hyperparameters}
  \centering
  \small\setlength{\tabcolsep}{3pt}\renewcommand{\arraystretch}{1.2}
  \begin{tabular}{@{}>{\raggedright\arraybackslash}p{0.30\textwidth}>{\raggedright\arraybackslash}p{0.215\textwidth}>{\raggedright\arraybackslash}p{0.215\textwidth}>{\raggedright\arraybackslash}p{0.215\textwidth}@{}}
    \toprule
    & TD3 & SAC & PPO \\
    \midrule
    Actor and critic hidden layers & \multicolumn{3}{>{\centering\arraybackslash}p{0.67\textwidth}@{}}{$2 \times 256$, ReLU} \\
    Actor output & $\tanh$ & $\tanh$-squashed Gaussian & Gaussian, state-independent $\log\sigma$ \\
    Optimizer & \multicolumn{3}{>{\centering\arraybackslash}p{0.67\textwidth}@{}}{Adam} \\
    Learning rate & $3\times10^{-4}$ (actor and critic) & actor, critic, temperature $3\times10^{-4}$ & $3\times10^{-4}$, linearly annealed \\
    Discount $\gamma$ & \multicolumn{3}{>{\centering\arraybackslash}p{0.67\textwidth}@{}}{0.99} \\
    Target update & Polyak $\tau = 0.005$, actor and targets every critic step & Polyak $\tau = 0.005$ & n/a\\
    Batch size & 256 & 256 & rollout 2048, minibatch 64, 10 epochs \\
    Exploration & $\mathcal{N}(0, 0.1^2)$ action noise & temperature tuned to entropy $-|\mathcal{A}|$ & entropy coef.\ 0 \\
    Target smoothing & noise 0.2, clipped at 0.5 & n/a& n/a\\
    Other & n/a& $\log\sigma \in [-20, 2]$ & clip 0.2, GAE $\lambda = 0.95$, value coef.\ 0.5, grad.\ norm 0.5 \\
    Updates per environment step & 1 & 1 & n/a\\
    Replay buffer & $10^6$ & $10^6$ & n/a\\
    Random-action warm-up & 25{,}000 steps & 10{,}000 steps & n/a\\
    Training steps & \multicolumn{3}{>{\centering\arraybackslash}p{0.67\textwidth}@{}}{$10^6$; $3 \times 10^6$ in Appendices~\ref{app:p2} and~\ref{app:p4}} \\
    Evaluation & \multicolumn{3}{>{\centering\arraybackslash}p{0.67\textwidth}@{}}{every 5{,}000 steps, 10 episodes, deterministic policy} \\
    Seeds & \multicolumn{3}{>{\centering\arraybackslash}p{0.67\textwidth}@{}}{5; 3 in Appendices~\ref{app:p2} and~\ref{app:p9}} \\
    \midrule
    \multicolumn{4}{@{}l}{\emph{Shaping head}} \\
    Head $h$ & \multicolumn{3}{>{\raggedright\arraybackslash}p{0.67\textwidth}@{}}{linear map of the actor's last hidden layer (256 units), weights and bias initialized to zero} \\
    MLP head & \multicolumn{3}{>{\raggedright\arraybackslash}p{0.67\textwidth}@{}}{one hidden layer (ReLU), output layer initialized to zero; hidden units 16, 64, 256 (\S\ref{sec:mechanism}); 64 (Appendix~\ref{app:p2})} \\
    Gate $\alpha$ & \multicolumn{3}{>{\raggedright\arraybackslash}p{0.67\textwidth}@{}}{one learnable scalar, $\alpha_0 \sim U(-\varepsilon, \varepsilon)$, $\varepsilon = 0.01$; trained by the actor optimizer} \\
    Action & \multicolumn{3}{>{\raggedright\arraybackslash}p{0.67\textwidth}@{}}{$a = f\big(\mu(s) + \alpha\, h(s)\big)$, $f$ the host's output map ($\tanh$ and scaling for TD3 and SAC)} \\
    \bottomrule
  \end{tabular}
\end{table}

\begin{table}[ht]
  \caption{%
    \textbf{Hyperparameters that differ between experiments.}
    Every other setting is as in Table~\ref{tab:hyperparameters}.
  }
  \label{tab:hyperparameters-varying}
  \centering
  \small\setlength{\tabcolsep}{3pt}\renewcommand{\arraystretch}{1.2}
  \begin{tabular}{@{}>{\raggedright\arraybackslash}p{0.30\textwidth}>{\raggedright\arraybackslash}p{0.33\textwidth}>{\raggedright\arraybackslash}p{0.33\textwidth}@{}}
    \toprule
    Experiment & Setting & Value \\
    \midrule
    Batch size (Appendix~\ref{app:p8}) & batch size & 64, 256, 1024 \\
    Update-to-data ratio (Appendix~\ref{app:utd}) & updates per environment step & 1, 5, 20; 10 critics, minimum over 2 \\
    Gate initialization (Appendix~\ref{app:eps}) & $\varepsilon$ & 0.01, 0.1 \\
    \bottomrule
  \end{tabular}
\end{table}

\begin{table}[ht]
  \caption{%
    \textbf{Hyperparameters for Atari} (discrete SAC).
  }
  \label{tab:hyperparameters-atari}
  \centering
  \small\setlength{\tabcolsep}{3pt}
  \begin{tabular}{@{}>{\raggedright\arraybackslash}p{0.30\textwidth}>{\raggedright\arraybackslash}p{0.68\textwidth}@{}}
    \toprule
    Setting & Value \\
    \midrule
    Preprocessing & frame skip 4, $84 \times 84$ grayscale, 4 stacked frames, up to 30 no-ops \\
    Episode end & life loss in training, game over in evaluation \\
    Network & convolutions $32{\times}8{\times}8/4$, $64{\times}4{\times}4/2$, $64{\times}3{\times}3/1$, fully connected 512 \\
    Optimizer & Adam with numerical constant $10^{-4}$; policy, $Q$ and temperature learning rates $3\times10^{-4}$ \\
    Discount $\gamma$ & 0.99 \\
    Batch size & 64 \\
    Replay buffer & $10^5$ \\
    Learning starts & 20{,}000 steps \\
    Update frequency & every 4 steps \\
    Target network & hard update every 8{,}000 steps \\
    Temperature & automatically tuned to $0.89 \times$ the maximum entropy \\
    Training steps & $10^6$ \\
    Evaluation & 10 deterministic episodes per seed at the 1M-step checkpoint (Appendix~\ref{app:atari}) \\
    Seeds & 5 per game \\
    Shaping head & linear map of the shared network's output to the logits, weights and bias initialized to zero; gate as in Table~\ref{tab:hyperparameters} \\
    \bottomrule
  \end{tabular}
\end{table}

\end{document}